\documentclass{article} 
\usepackage{iclr2026_conference,times}

\usepackage{amsmath,amsfonts,bm}

\def\eqref#1{equation~\ref{#1}}

\def\1{\bm{1}}

\DeclareMathAlphabet{\mathsfit}{\encodingdefault}{\sfdefault}{m}{sl}
\SetMathAlphabet{\mathsfit}{bold}{\encodingdefault}{\sfdefault}{bx}{n}

\newcommand{\R}{\mathbb{R}}

\usepackage{hyperref}
\usepackage{url}

\usepackage{enumitem}
\usepackage{physics}
\usepackage[utf8]{inputenc} 
\usepackage[T1]{fontenc}    
\usepackage{url}            
\usepackage{booktabs}       
\usepackage{amsfonts}       
\usepackage{nicefrac}       
\usepackage{microtype}      
\usepackage{xcolor}         
\usepackage{graphicx}
\usepackage{adjustbox}
\usepackage{wrapfig}
\usepackage{amsthm}
\usepackage{amsthm} 
\usepackage{pgfplots}
\pgfplotsset{compat=1.18}

\usepackage{amsmath,amsthm,amssymb,mathtools}
\usepackage[pass]{geometry}

\usepackage{pgfplots}
\usepackage{tikz}
\definecolor{QPblue}{RGB}{24,78,119}
\definecolor{liblue}{rgb}{0.655,0.835,1.0}
\definecolor{ligreen}{rgb}{0.686,1.0,0.655}
\definecolor{lired}{rgb}{0.961,0.306,0.259}
\definecolor{liyellow}{rgb}{0.988,0.961,0.573}
\definecolor{lilila}{HTML}{E0D4E7}    
\definecolor{Wowgreen}{HTML}{009052}   
\usepackage{multirow}
\hypersetup{
    colorlinks,
    citecolor=black,
    linkcolor=red,
    urlcolor=magenta
}

\usepackage{bookmark}            
\usepackage{titletoc}                
\usepackage[dvipsnames,svgnames,x11names]{xcolor} 
\usepackage[table]{xcolor}
\usepackage{hyperref}
\usepackage{cleveref}
\crefname{equation}{Eq.}{Eqs.}   
\Crefname{equation}{Eq.}{Eqs.}   
\creflabelformat{equation}{#2#1#3}          
\crefrangelabelformat{equation}{#3#1#4--#5#2#6} 

\usepackage{subcaption}

\usepackage{amsmath}
\usepackage{bm}
\usepackage{mathtools,empheq}

\newtheorem{theorem}{Theorem}

\newtheorem{definition}{Definition}
\setlist[itemize,enumerate]{noitemsep, topsep=0pt, leftmargin=2em}
\usepackage{placeins}

\newcommand{\given}{\:\vert\:}
\newcommand{\matr}[1]{\mathrm{\textbf{#1}}}
\newcommand{\vect}[1]{\bm{#1}}
\newcommand{\origin}{\overline{\bm{0}}}
\newcommand{\transp}[3]{\mathrm{PT}^K_{{#1}\rightarrow{#2}}\left({#3}\right)}

\newcommand{\lprod}[2]{\langle #1,#2\rangle_{\mathcal{L}}}

\newcommand{\tangentsp}[1]{\mathcal{T}_{#1}\mathbb{L}_K^n}

\title{Intrinsic Lorentz Neural Network}

\author{
  Xianglong Shi$^{1}$\thanks{Equal contribution.}, Ziheng Chen$^{2*}$\thanks{Corresponding author.}, Yunhan Jiang$^{3}$,
  \textbf{Nicu Sebe$^{2}$} \\
  $^{1}$University of Science and Technology of China, 
  $^{2}$University of Trento,  
$^{3}$Peking University\\
  \texttt{xlshi@mail.ustc.edu.cn, ziheng\_ch@163.com}, \\
}

\iclrfinalcopy 
\begin{document}

\maketitle

\begin{abstract}
Real-world data frequently exhibit latent hierarchical structures, which can be naturally represented by hyperbolic geometry. Although recent hyperbolic neural networks have demonstrated promising results, many existing architectures remain partially intrinsic, mixing Euclidean operations with hyperbolic ones or relying on extrinsic parameterizations. To address it, we propose the \emph{Intrinsic Lorentz Neural Network} (ILNN), a fully intrinsic hyperbolic architecture that conducts all computations within the Lorentz model. At its core, the network introduces a novel \emph{point-to-hyperplane} fully connected layer (FC), replacing traditional Euclidean affine logits with closed-form hyperbolic distances from features to learned Lorentz hyperplanes, thereby ensuring that the resulting geometric decision functions respect the inherent curvature. 
Around this fundamental layer, we design intrinsic modules: GyroLBN, a Lorentz batch normalization that couples gyro-centering with gyro-scaling, consistently outperforming both LBN and GyroBN while reducing training time. We additionally proposed a gyro-additive bias for the FC output, a Lorentz patch-concatenation operator that aligns the expected log-radius across feature blocks via a digamma-based scale, and a Lorentz dropout layer.
Extensive experiments conducted on CIFAR-10/100 and two genomic benchmarks (TEB and GUE) illustrate that ILNN achieves state-of-the-art performance and computational cost among hyperbolic models and consistently surpasses strong Euclidean baselines. The code is available at \href{https://github.com/Longchentong/ILNN}{\textcolor{magenta}{this url}}.
\end{abstract}

\section{Introduction}\label{sec:intro}
Hierarchical structures exist across a wide range of machine learning applications, including computer vision~\citep{khrulkov2020hyperbolic,atigh2022hyperbolicseg,bdeir2024fully,pal2025compositional,shi2025lihe}, natural language processing~\citep{ganea2018hyperbolic,tifrea2019poincare,yang2024hyperbolic}, knowledge-graph reasoning~\citep{nickel2017poincare,balazevic2019murp,welz2025multi}, graph learning~\citep{chami2019hyperbolic,yang2022hyperbolic,li2024hygnet}, and genomics tasks~\citep{gene,iscience2021hyperbolicgene,hysurvpred2025}.
While Euclidean neural networks often incur high distortion or require excessive dimensionality when embedding hierarchical or scale-free structures~\citep{nickel2018learning}, hyperbolic geometry, characterized by constant negative curvature, offers exponential representational capacity and enables more compact embeddings~\citep{ganea2018hyperbolic, HNNadd}.
Consequently, \emph{hyperbolic neural networks} (HNNs) have demonstrated notable success in applications with hierarchical structure \citep{ganea2018hyperbolic,chami2019hyperbolic,HNNadd,gulcehre2019hyperbolicAT,li2024hygnet,fan2024lorentzkg,leng2024hypersdfusion,bdeir2024fully,pal2025compositional,chen2025gyrobn,nguyen2025neural,he2025helm}.

Early hyperbolic neural networks predominantly operated in the Poincar{\'e} model because its gyrovector formalism makes many neural primitives straightforward to define~\citep{ganea2018hyperbolic,HNNadd}. However, the unit-ball constraint and boundary saturation render the Poincar{\'e} ball more susceptible to numerical instabilities than the Lorentz model, motivating a recent shift toward Lorentz neural networks (LNNs) that exhibit improved optimization stability~\citep{bdeir2024fully}. Despite this trend leading to the emergence of excellent work~\citep{he2024lorentzresnet,fan2024lorentzkg,liang2024fhre}, recent advanced LNNs~\citep{bdeir2024fully,gene} architectures remain only partially intrinsic: they mix Euclidean affine transformations with manifold operations or rely on extrinsic parameterizations that compromise geometric consistency. For example, the Lorentz fully connected layer (LFC) ~\citep{bdeir2024fully} that applies Euclidean mappings to Lorentz vector, then extracts the space part from mapping output and accordingly calculates the time part to form the output Lorentz vector.
Normalization further illustrates this tension. Lorentz batch normalization (LBN)~\citep{bdeir2024fully} recenter features but either ignore gyro-variance, while GyroBN~\citep{chen2025gyrobn} offers gyrogroup-based control yet can still be computationally heavy due to depending on Fr{\'e}chet-type statistics. These compromises limit the representational power and efficiency of Lorentz neural networks.


An effective way to mitigate partially intrinsic designs is the \emph{point-to-hyperplane} formulation, which has been verified on hyperbolic Poincaré \citep{HNNadd} and matrix manifolds~\citep{nguyen2025neural,nguyen2024matrix}. Inspired by this, we introduce ILNN, a fully intrinsic hyperbolic network in which every operation, parameter, and update is defined inside the Lorentz model. At its core is a \emph{Point-to-hyperplane Lorentz Fully Connected} (PLFC) layer that replaces Euclidean affine transformation with closed-form Lorentzian distances to learned hyperplanes, yielding curvature-aware, margin-interpretable decision functions. Surrounding PLFC, we develop intrinsic components: \emph{GyroLBN}, a batch normalization that couples gyro-centering with variance-controlled gyro-scaling and outperforms LBN and GyroBN while reducing wall-clock time. Furthermore, we introduced a log-radius algin Lorentz patch-concatenation to build the stable CNN module, a gyro-additive bias, and a Lorentz dropout layer to further improve its performance. By designing intrinsic geometric operations, ILNN maintains mathematical consistency and geometric interpretability throughout the network. 

The main contributions of our work can be summarized as follows:
\begin{enumerate}
\item We propose an \emph{intrinsic hyperbolic neural network} that eliminates reliance on extrinsic Euclidean operations, thus fully harnessing hyperbolic geometry.
\item We introduce a novel \emph{point-to-hyperplane Lorentz fully connected layer}(PLFC), which replaces traditional affine transformations with intrinsic hyperbolic distances, significantly enhancing representational fidelity.
\item We introduce a GyroLBN, a Lorentz batch normalization that combines gyro-centering with gyro-scaling, consistently outperforming both LBN and GyroBN while reducing training time.
\item Extensive experiments demonstrate the effectiveness of ILNN, achieving state-of-the-art performance on CIFAR-10/100 datasets and genomic benchmarks (TEB and GUE), consistently outperforming Euclidean and existing hyperbolic counterparts. 
\end{enumerate}


\section{related work}

\paragraph{Hyperbolic embeddings.}
A large body of work demonstrates that negatively curved representations can efficiently encode hierarchical and scale-free structure across modalities. Foundational results on hyperbolic embeddings (eg, Poincaré embeddings) show strong benefits for symbolic data with latent trees \citep{nickel2017poincare}, and early hyperbolic neural architectures extend core layers and classifiers to the Poincaré model for language tasks \citep{ganea2018hyperbolic}. On graphs, hyperbolic GNNs capture hierarchical neighborhoods and outperform Euclidean counterparts on link prediction and node classification \citep{chami2019hyperbolic}; knowledge graphs likewise benefit from multi-relational hyperbolic embeddings \citep{balazevic2019murp}. In vision, hyperbolic image embeddings improve retrieval and classification under class hierarchies \citep{khrulkov2020hyperbolic}, and fully hyperbolic CNNs (HCNN) generalize convolution, normalization, and MLR directly in the Lorentz model, yielding strong encoder-side gains \citep{bdeir2024fully}. 
Hyperbolic modeling has also been adapted to genomics, where hyperbolic genome embeddings leverage evolutionary signal to surpass Euclidean baselines across diverse benchmarks \citep{gene}. In this paper, we target these two application fronts, \emph{vision} and \emph{genomics}.

\paragraph{Hyperbolic neural networks.}
Designing \emph{fully hyperbolic} neural networks means replacing every Euclidean building block with operations that are intrinsic to a negatively curved manifold, without shuttling features back and forth between Euclidean and hyperbolic spaces. Early work on the Poincaré ball showed how to lift common layers and primitives (linear/FC maps, activations, concatenation, MLR) using gyrovector-space calculus and Möbius operations~\citep{ganea2018hyperbolic,HNNadd}, while concurrent efforts formulated end-to-end hyperbolic models beyond simple projection heads~\citep{chen2022fully}. 
Due to the Poincar{\'e} ball's greater susceptibility to numerical instability, subsequent work tried to generalize network components to the Lorentz model~\citep{gulcehre2019hyperbolicAT, chami2019hyperbolic, fan2023horospherical}. Beyond classification heads, attention mechanisms~\citep{gulcehre2019hyperbolicAT}, graph convolutional layers~\citep{chami2019hyperbolic}, and fully hyperbolic generative models~\citep{qu2022lorentzgan} were also developed in this setting. A major advance came with HCNN, which introduced Lorentz-native formulations of convolution, batch normalization, and multinomial logistic regression, bringing vision encoders fully into the hyperbolic domain~\citep{chen2022fully,bdeir2024fully,yang2024hypformer}.
But it still remains partially intrinsic, mixing Euclidean operations with hyperbolic ones.
Our work follows this \emph{intrinsic-first} principle and departs in key aspects: a point-to-hyperplane fully connected layer in the Lorentz model with closed-form $h$-distances for logits and GyroLBN, a Lorentz batch normalization that couples gyro-centering with gyro-scaling.

\paragraph{Normalization in HNNs.}
Normalization layers are central to stable and efficient training, yet extending them to curved spaces remains challenging. A general Riemannian batch normalization (RBN) based on the Fréchet mean was first made practical by differentiable solvers, enabling manifold-aware centering and variance control; however, its iterative nature can be slow in practice~\citep{lou2020frechet}. Recently, \citet{bdeir2024fully} introduced \emph{Lorentz Batch Normalization (LBN)}, which uses the closed-form Lorentzian centroid~\citep{law2019lorentzian} for re-centering and a principled tangent-space rescaling. Concurrently, \emph{Gyrogroup Batch Normalization (GyroBN)} generalizes RBN to manifolds with gyro-structure, performing gyro-centering and variance-controlled gyro-scaling that better preserve distributional shape under gyro-operations~\citep{chen2025gyrobn}. While LBN provides efficient Lorentzian re-centering and global rescaling, it does not explicitly align batch statistics under gyro-operations; GyroBN performs such alignment but is computationally heavier because it relies on Fréchet means. Motivated by these trade-offs, we propose \emph{GyroLBN}, which aligns batch statistics under gyro-operations within the Lorentz model while retaining the speed and practicality of closed-form re-centering.

\section{Background}
\label{sec:background}

There are five isometric hyperbolic models that one can work with \citep{cannon-1997}. We adopt the Lorentz (hyperboloid) model due to numerical stability \citep{mishne2023numerical}.

\begin{wrapfigure}{r}{5.5cm}
\centering
\includegraphics[width=4cm]{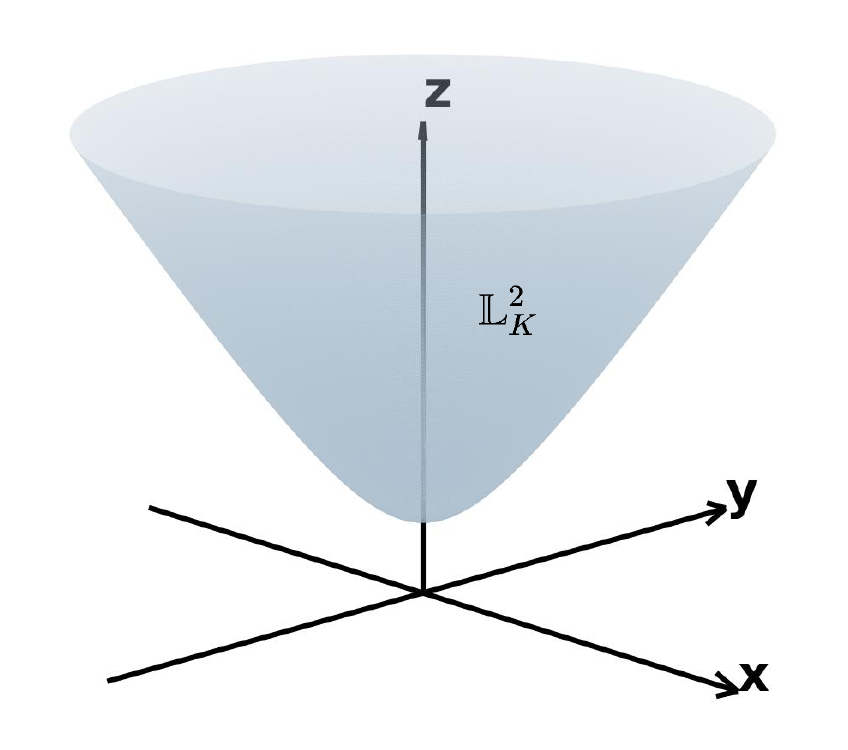}
\caption{2-dimensional Lorentz model in the 3-dimensional Minkowski space.}\label{fig:hyp_mod}
\end{wrapfigure} 
\paragraph{Lorentz model.}
The Lorentz model embeds hyperbolic space as the upper sheet of a two-sheeted hyperboloid in $(n\!+\!1)$-dimensional Minkowski space. It is defined as
\[
\mathbb{L}^n_K \;=\; \Bigl\{x\in\mathbb{R}^{n+1} \; \Big|\; \langle x,x\rangle_{L} = \tfrac{1}{K},\; x_0>0 \Bigr\}.
\]
where $K<0$ denotes the constant sectional curvature, and $\langle x,y\rangle_{L}:=-x_0y_0 + \sum_{i=1}^n x_i y_i$ is the Lorentz inner product. Following \citet{ratcliffe-2006}, we write $x=(x_0,x_s)$ with \emph{time} coordinate $x_0$ and \emph{space} coordinates $x_s\in\mathbb{R}^n$. The origin is $o=(K^{-1/2},0,\ldots,0)$. The closed-form squared distance is given by $d^2_{\mathcal{L}}(\vect{x},\vect{y}) = ||\vect{x}-\vect{y}||_{\mathcal{L}}^2 = \frac{2}{K}-2\lprod{\vect{x}}{\vect{y}}$. Other Riemannian operators are presented in \Cref{apdx:lorentz_ops}, such as exponential and logarithmic maps, and parallel transport.

\paragraph{Gyrovector space.}
It forms the algebraic foundation for the hyperbolic space, as the vector space for the Euclidean space \citep{ungar2022analytic}. A gyrovector space has gyroaddition and scalar gyromultiplication, corresponding to the vector addition and scalar multiplication in Euclidean space. Recently, \citet{chen2025riemannian} proposed the gyroaddition and gyromultiplication over the Lorentz model:
\begin{align} 
    \label{eq:calmk_gyroadd_def}
    x \oplus  y 
    &= \operatorname{Exp}_{x}\left(\operatorname{PT}_{\overline{\mathbf{0}}\to x} \left(\operatorname{Log}_{\overline{\mathbf{0}}}(y)\right)\right), \quad \forall x,y \in \mathbb{L}^n_K, \\
    \label{eq:calmk_gyro_prod_def}
    t \odot x
    &= \operatorname{Exp} _{\overline{\mathbf{0}}} \left( t  \operatorname{Log}_{\overline{\mathbf{0}}} (x) \right), \quad \forall t \in \mathbb{R}, \forall x \in \mathbb{L}^n_K.  
\end{align}
Particularly, the inverse is $\ominus x := (-1)\odot x =\left[x_t,-x_s\right]$, which satisfies $x\oplus(\ominus x)=(\ominus x)\oplus x=\overline{\mathbf{0}}$. \cref{eq:calmk_gyroadd_def,eq:calmk_gyro_prod_def} admit closed-form expressions, which are more efficient.



\section{Intrinsic Lorentz neural network}
We introduce the Intrinsic Lorentz Neural Network (ILNN), whose components are entirely defined by the Lorentz geometry. Specifically, we propose (i) a point-to-hyperplane fully connected (FC) layer, (ii) GyroLBN, a Lorentz batch normalization layer, and (iii) several other intrinsic modules, including log-radius concatenation, Lorentz activations, and Lorentz dropout.

\subsection{Point-to-hyperplane Lorentz fully‑connected layer}
\label{subsec:h_fc}


FC layers perform an affine transformation defined by $\bm{y} = \bm{A}\bm{x} - \bm{b}$, which can be element-wisely expressed as $y_k = \bm{a}_k \bm{x} - b_k$, where $\bm{x}, \bm{a}_k \in \mathbb{R}^n$ and $b_k \in \mathbb{R}$~\citep{HNNadd,chen2025building}. Geometrically, this transformation can be interpreted as mapping the input vector $\bm{x}$ to an output score $y_k$, representing either the coordinate value or the signed distance relative to a hyperplane passing through the origin and orthogonal to the $k$-th coordinate axis in the output space $\mathbb{R}^m$. Motivated by this geometric interpretation, in this section, we first derive the Lorentz multinomial logistic regression (Lorentz MLR) to obtain the signed distance, and subsequently present the formulation of the point-to-hyperplane Lorentz fully connected (PLFC) layer.

\paragraph{Lorentz MLR.}
In Euclidean space, the \emph{multinomial logistic regression} (MLR) for class
$c\!\in\!\{1,\dots,C\}$ can be formulated as the logits of the Euclidean MLR classifier using the distance from instances to hyperplanes describing the class region~\citep{mlr}, which can be written as
\begin{equation}\label{eq:mlrEuclTop}
    p(y=c\given\vect{x}) \propto \exp(v_{c}(\vect{x})),~~v_{c}(\vect{x})=\text{sign}(\langle\vect{a}_c,\vect{x}-\vect{p}_c\rangle)||\vect{a}_c||d(\vect{x}, H_{\vect{a}_c,\vect{p}_c}),~~\vect{a}_c\in\mathbb{R}^n,
\end{equation}
where $H_{\vect{a}_c,\vect{p}_c} = \left\{ \vect{x} \in \mathbb{R}: \langle \vect{a}_c, \vect{x}-\vect{p}_c \right\}$ is the Euclidean hyperplane of class~$c$ and $d( \cdot , \cdot ) $ denotes Euclidean distance. Once we have the Lorentz hyperplane and closed-form point-hyperplane distance, we can extend Euclidean MLR to the Lorentz model.

In Lorentz model $\mathbb{L}_K^n$, following~\cite{bdeir2024fully}, for $\vect{p}\in \mathbb{L}^n_K$ and $\vect{w}\in \mathcal{T}_{\vect{p}}\mathbb{L}^n_K$, the hyperplane passing through $\vect{p}$ and perpendicular to $\vect{w}$ is given by $H_{\vect{w},\vect{p}} = \{\vect{x}\in\mathbb{L}^n_K\given\langle \vect{w},\vect{x}\rangle_{\mathcal{L}}=0\},$ where $\vect{w}$ should satisfy the condition $\langle\vect{w},\vect{w}\rangle > 0$. To eliminate this condition, 
$\vect{w}\in \mathcal{T}_{\vect{p}}\mathbb{L}^n_K$ is parameterized by a vector $\vect{\overline{z}}\in \tangentsp{\origin} = [0,a{\vect{z}/}{||\vect{z}||}]$, where $a \in \mathbb{R}$ and $\vect{z}\in \mathbb{R}^n$. As $\vect{w}\in\tangentsp{\vect{p}}$, $\vect{\overline{z}}$ is parallel transport to $\vect{p}$, the Lorentz hyperplane defined by 
\begin{equation}\label{eq:hyppp}
	\tilde{H}_{\vect{z},a} = \{\vect{x}\in \mathbb{L}^n_K~|~ \cosh(\sqrt{-K}a)\langle \vect{z}, \vect{x}_s\rangle-\sinh(\sqrt{-K}a)~||\vect{z}||~x_t = 0\},
\end{equation}
where a and z represent the distance and orientation to the origin, respectively. Due to the $\vect{z} \in \tangentsp{\origin}$ and the sign-preserving property of parallel transported, this construction naturally satisfies $\langle \vect{w},\vect{w}\rangle > 0$. Then for $x\in\mathbb{L}_K^n$, the distance to a Lorentz hyperplane is given by
\small
\begin{equation}\label{eq:lorentz_dist}
d_{\mathcal{L}}\!\bigl(\vect{x},\tilde H_{\vect{z},a}\bigr)
  \;=\;\frac{1}{\sqrt{-K}}\,
        \left|\sinh^{-1}\!\Bigl(
           \sqrt{-K}\,
           \frac{\cosh(\sqrt{-K}a)\langle\vect{z},\vect{x}_s\rangle
                 -\sinh(\sqrt{-K}a)\|\vect{z}\|_2\,x_t}
                {\sqrt{\|\cosh(\sqrt{-K}a)\vect{z}\|_2^{2}
                     -\bigl(\sinh(\sqrt{-K}a)\|\vect{z}\|_2\bigr)^{\!2}}}
        \Bigr)\right|, 
\end{equation}
Substituting \Cref{eq:hyppp} and \Cref{eq:lorentz_dist} into \Cref{eq:mlrEuclTop}, we obtain that the Lorentz MLR’s output logit corresponding to class c is given by
\begin{equation}\label{eq:lorentz_logits}
    v_{\vect{z}_c, a_c}(\vect{x})=\frac{1}{\sqrt{-K}}~\mathrm{sign}(\alpha_c)\beta_c\left|\sinh^{-1}\left(\sqrt{-K}\frac{\alpha_c}{\beta_c}\right)\right|,
\end{equation}
\begin{equation*}
    \alpha_c = \cosh(\sqrt{-K}a)\langle \vect{z}, \vect{x}_s\rangle-\sinh(\sqrt{-K}a),
\end{equation*}
\begin{equation*}
    \beta_c = \sqrt{||\cosh(\sqrt{-K}a)\vect{z}||^2-(\sinh(\sqrt{-K}a)||\vect{z}||)^2}.
\end{equation*}


\paragraph{Lorentz fully‑connected layer (PLFC).}
~\citet{HNNadd} interpreted the Euclidean FC layer as an operation that transforms the input $\vect{x}$ via $\vect{v}_k(\vect{x})$, treating the output $\vect{y}_{\vect{k}}$ as the signed distance from the hyperplane $H_{\vect{e}^{(k)},0}$ passing through the origin and orthogonal to the $k$-th axis of the output space $\mathbb{R}^m$, which can be written as

\begin{equation}
\label{eq:euc_sign_distance}
d^{\pm} \!\bigl(\vect{y}_k, H_{\vect{e}^{(k)},0}\bigr)
= v_k(\vect{x}),\qquad k=1,\dots,m.
\end{equation}
where $d^{\pm}(\cdot,\cdot)$ denote the signed distance in Euclidean space.
We now collect the above ingredients into an \emph{intrinsic} FC layer that maps an input
$\vect{x}\!\in\!\mathbb{L}^n_K$ to an output $\vect{y}\!=\!(y_0,\vect{y}_s)\in\mathbb{L}^m_K$.
Let $\{(\vect{z}_k,a_k)\}_{k=1}^{m}$ be learnable hyperplane parameters and define $v_k(\vect{x})\!=\!v_{\vect{z}_k,a_k}(\vect{x})$ as in \Cref{eq:lorentz_logits}.  Matching each $v_k(\vect{x})$
to the signed $h$‑distance from $\vect{y}$ to the $k$‑th \emph{coordinate hyperplane} $\bar H_{e^{(k)},0}=\{\vect{x} = (x_0,x_1,\dots,x_m)^T\in\mathbb{L}^m_K\mid \langle\vect{e}^{(k)},x\rangle=\vect{x}_k=0,k = 1,2\dots,m\}$ fixes the spatial coordinates, while the time coordinate follows from the hyperboloid constraint.

\begin{theorem}[PLFC layer]\label{thm:plfc}
Let
$\vect{x}\!\in\!\mathbb{L}^{n}_K$,
$Z=\{\vect{z}_k\}_{k=1}^{m}\!\subset\!\R^{n}$ and
$a=\{a_k\}_{k=1}^{m}\!\subset\!\R$.
The \emph{point–to–hyperplane Lorentz fully connected layer}
${\operatorname{PLFC}_{K}\!:\mathbb{L}^{n}_K\to\mathbb{L}^{m}_K}$ is
\begin{subequations}\label{eq:plfc_def}
\begin{align}
  v_k(\vect{x}) &\;=\; v_{\vect{z}_k,a_k}(\vect{x}),\quad k=1,\dots,m,
  \label{eq:plfc_def_v}\\
  y_{s,k} &\;=\; \frac{1}{\sqrt{-K}}\,
                 \sinh\!\bigl(\sqrt{-K}\,v_k(\vect{x})\bigr),
                 \label{eq:plfc_def_sk}\\
  y    &\;=\;\left[ \sqrt{(-K)^{-1}+\|\vect{y}_s\|^{2}_2},\vect{y}_s\right],
                 \label{eq:plfc_def_time}
\end{align}
\end{subequations}
where $\vect{y}_s=(y_{s,1},\dots,y_{s,m})^{\!\top}$.
In the flat‑space limit $K \!\to\! 0$,~\eqref{eq:plfc_def}
reduces to the Euclidean affine map
$\vect{y}=Av+b$ with $A_{k}=\vect{z}_k^{\!\top}$ and bias $b=a$.
\end{theorem}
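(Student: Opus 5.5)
The theorem makes two claims. First, the construction defines a map into $\mathbb{L}^m_K$, and $y_{s,k}$ is the point whose signed distance to the $k$-th coordinate hyperplane equals $v_k(\vect{x})$. Second, the flat limit is affine. I would handle these in that order.

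\textbf{Well-definedness.} Equation~\eqref{eq:plfc_def_time} gives $-y_0^2+\|\vect{y}_s\|^2=-((-K)^{-1}+\|\vect{y}_s\|^2)+\|\vect{y}_s\|^2=1/K$ with $y_0>0$. Hence $y\in\mathbb{L}^m_K$ for every choice of $\vect{y}_s$, and this step is immediate.

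\textbf{Derivation of \eqref{eq:plfc_def_sk}.} I would specialize \Cref{eq:lorentz_dist} to the target space $\mathbb{L}^m_K$ with $\vect{z}=\vect{e}^{(k)}$ and $a=0$. Then $\cosh(0)=1$, $\sinh(0)=0$ and $\beta=\|\vect{e}^{(k)}\|=1$, so the hyperplane from \Cref{eq:hyppp} is exactly $\bar H_{e^{(k)},0}=\{y_{s,k}=0\}$. Dropping the absolute value to keep the sign from \Cref{eq:lorentz_logits}, the signed distance becomes
\begin{equation*}
d^{\pm}\bigl(y,\bar H_{e^{(k)},0}\bigr)=\frac{1}{\sqrt{-K}}\sinh^{-1}\bigl(\sqrt{-K}\,y_{s,k}\bigr).
\end{equation*}
Imposing the Lorentz analogue of \Cref{eq:euc_sign_distance}, namely $d^{\pm}(y,\bar H_{e^{(k)},0})=v_k(\vect{x})$, and inverting the strictly monotone bijection $\sinh^{-1}$ yields \eqref{eq:plfc_def_sk} uniquely. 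Since each $y_{s,k}$ is fixed independently, $\vect{y}_s$ is determined, and \eqref{eq:plfc_def_time} completes the point.

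\textbf{Flat-space limit.} I would write $c=\sqrt{-K}\to0$ and Taylor expand in $c$.
\begin{itemize}
\item In the output map, $\frac1c\sinh(c v)=v+O(c^2)$, so $y_{s,k}\to v_k(\vect{x})$.
\item For the logit, $\cosh(ca)\to1$ and $\sinh(ca)\approx ca$, so $\beta_c\to\|\vect{z}_k\|$.
\item Also $\frac1c\sinh^{-1}(c\,\alpha/\beta)\to\alpha/\beta$, giving $v_k\to\alpha_k$.
\end{itemize}
It remains to identify the limit of $\alpha_k$. Using the time-coordinate-weighted numerator of \Cref{eq:lorentz_dist}, $\alpha_k=\cosh(ca)\langle\vect{z}_k,\vect{x}_s\rangle-\sinh(ca)\|\vect{z}_k\|x_t$. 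Here $x_t=\sqrt{1/c^2+\|\vect{x}_s\|^2}\approx 1/c$, so $\sinh(ca)\,x_t\to a$. Therefore $v_k\to\langle\vect{z}_k,\vect{x}_s\rangle-\|\vect{z}_k\|a_k$, which is affine in $\vect{x}_s$ with rows $A_k=\vect{z}_k^\top$ and an offset determined by $a_k$. Identifying $\vect{x}_s$ with the Euclidean input gives the affine map. The bias equals $b=a$ up to the sign and normalization convention (e.g.\ $\|\vect{z}_k\|$ absorbed or unit-normalized).

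\textbf{Main obstacle.} The limit step is the delicate part, because $x_t$ diverges as $K\to0$ and must be paired with $\sinh(\sqrt{-K}a)$ to produce a finite bias. The paper's displayed $\alpha_c$ in \Cref{eq:lorentz_logits} omits the factor $\|\vect{z}\|x_t$, so I would work from \Cref{eq:lorentz_dist} directly. I would then state the bias identification carefully, noting the sign and normalization convention under which $b=a$.
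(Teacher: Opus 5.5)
Your proposal is correct and follows the paper's route. You specialize the point-to-hyperplane distance to $z=e^{(k)}$, $a=0$ (denominator $1$), drop the absolute value to get $\frac{1}{\sqrt{-K}}\sinh^{-1}(\sqrt{-K}\,y_{s,k})$, invert it, recover the time coordinate from the hyperboloid constraint, and Taylor-expand for the flat limit. Your handling of that limit is in fact more careful than the paper's: it expands the displayed $\alpha_c$, which lacks the factor $\|z\|x_t$ and would give no bias at all, yet asserts $v_k\to\langle z_k,x_s\rangle-a_k$, whereas restoring $x_t\approx 1/\sqrt{-K}$ as you do yields the offset $\|z_k\|a_k$, so $b=a$ holds only for unit-norm $z_k$ and under the convention $y=Ax-b$.
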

It can be shown that the signed distance from $\bm{y}$ to each Lorentz hyperplane passing through the origin and orthogonal to the $k$-th coordinate axis is given by $v_k(\bm{x})$, as detailed in the \Cref{proof:coord_lorentz_fc_layer}, thereby fulfilling the properties described above.

\paragraph{Gyro‑bias.}
A learnable offset $\vect{b}\!\in\!\mathbb{L}^{m}_K$ can be added intrinsically via the gyroaddition, $\vect{y}\leftarrow \vect{y}\ \oplus\ \vect{b}$, yielding the final PLFC output.

\paragraph{Discussion.} The previous Lorentz fully connected layer (LFC) used in HCNN was formed by
$$\mathbf{y} = \begin{bmatrix}
\sqrt{|\phi(W\mathbf{x},\mathbf{v})|^2 - 1/K} ,
\phi(W\mathbf{x},\mathbf{v})
\end{bmatrix},
$$
where $\mathbf{x}\in\mathcal{L}^n_K$ and $W\mathbf{x}$ is a standard matrix–vector product in $\mathbb{R}^n$. This $W\mathbf{x}$ is defined using the ambient linear structure of the Minkowski space, not any operation on the Lorentz manifold so itself only partially intrinsic.
The PLFC depends \emph{only} on Lorentzian operations, avoids tangent‑space linearisation, and enjoys closed‑form gradients, making it an efficient and curvature‑consistent replacement for Euclidean FC layers in hyperbolic networks. More detailed discussions are in Appendix~\ref{appendix:intrisic} and a theoretical advantage of PLFC over LFC is shown in Appendix~\ref{appendix:theorem2}.

\subsection{Gyrogroup Lorentz Batch Normalization (GyroLBN)}
\label{subsec:gyro_lbn}

Batch Normalization (BN) facilitates training by normalizing batch statistics. Recently, \citet{bdeir2024fully} proposed LBN, which uses a Lorentzian centroid to efficiently compute the batch mean. However, their approach fails to normalize sample statistics. Besides, \citet{chen2025gyrobn,chen2025riemannian} extended BN into manifolds based on gyrogroup structures, referred to GyroBN~\citep{chen2025gyrobn}. Although it can normalize sample mean/variance on Lorentz spaces, the involved Fréchet mean could be inefficient. Therefore, we propose \emph{GyroLBN} to combine the gyrogroup normalization with Lorentzian centroid \& radius statistics, retaining GyroBN’s effectiveness while eliminating computationally expensive Fréchet mean. 


\paragraph{Batch centering and dispersion.} Generally, the Fréchet mean must be solved iteratively, which significantly delays the training speed. Therefore, following~\citet{law2019lorentzian,bdeir2024fully}, with $\vect{x}_i \in \mathbb{L}^n_K$, $\nu_i\geq0, \sum_{i=1}^{m}\nu_i > 0$ and a batch of features $B=\{x_i \in \mathbb{L}^n_K \}_{i=1}^m$, is given by, we define the batch mean by Lorentzian centroid which can be calculated efficiently in closed form \citep{law2019lorentzian}:
\begin{equation}
\label{eq:lorentz_centroid}
\mu_B
~=~\frac{\sum_{i=1}^{m}\nu_i\vect{x}_i}{\sqrt{-K}\left|||\sum_{i=1}^{m}\nu_i\vect{x}_i||_{\mathcal{L}}\right|},
\end{equation}
which solves $\min_{{\mu}_B \in \mathbb{L}^n_K}\sum_{i=1}^{m}\nu_i d^2_{\mathcal{L}}(\vect{x}_i, {\mu}_B)$. Moreover, the mean is not weighted, which gives $\nu_{i} = \frac{1}{m}$.

As for dispersion, we adopt the Fréchet variance $\sigma^2 \in \mathbb{R}^+$, defined as the expected squared Lorentzian distance between a point $\vect{x}_i$ and the mean ${\mu}_B$, and given by $\sigma_B^2 = \frac{1}{m}\sum_{i=1}^{m}d_\mathcal{L}^2(\vect{x}_i, {\mu})$ \citep{kobler-et-al-2022}.

\paragraph{Normalization map.}
Denote learned scale $\gamma\in\mathbb{R}_{>0}$ (per-channel) and learned bias $\beta\in L^d_K$; 
For each sample $x$, the GyroLBN output is 
\begin{equation}
    \label{eq:gyro_lbn}
    \forall i \leq N, \quad \tilde{x}_i \gets  \overbrace{ \beta \oplus}^{\text{Biasing}} \left( \overbrace{\frac{\gamma}{\sqrt{\sigma_B+\epsilon}} \odot }^{\text{Scaling}} \left( \overbrace{\ominus \mu_B  \oplus x_i}^{\text{Centering}} \right) \right), \qquad \varepsilon>0.
\end{equation}
where $\oplus$ denotes left gyroaddition \Cref{eq:calmk_gyroadd_def}, $\ominus$ denotes gyro inverse, and $\odot$ denotes gyro scalar product \Cref{eq:calmk_gyro_prod_def}.
Unless otherwise stated, during inference, we maintain per-channel running statistics by updating the Lorentzian centroid (mean) and dispersion (variance) with momentum and, at test time, substitute these for batch stats.

\paragraph{Discussion.}
GyroLBN unifies the \emph{gyrogroup} normalization paradigm of GyroBN with the \emph{efficient} Lorentz statistics of LBN, differing from GyroBN only in the choice of statistics by replacing Fr\'echet-based batch estimates with closed-form Lorentzian centroid and variance while retaining the same gyrogroup centering and scaling scheme. Compared to Fréchet variance used in generic Riemannian BN~\citep{lou2020frechet,chen2022fully}, the mean-radius statistic avoids iterative solvers and proved numerically stable in our vision/genomics settings.
It therefore (i) remains fully intrinsic, (ii) avoids iterative Fréchet solvers (important for large batches and 2D convolutions), and (iii) integrates naturally with gyro-additive residual/bias layers used in our encoder. 
Compared to LBN and Fréchet-based GyroBN \citep{lou2020frechet}, GyroLBN consistently reduced wall-clock time while improving accuracy in our settings, as shown in \Cref{table:ablation}.  

\subsection{Other Lorentz modules}

\paragraph{Log-radius concatenation.}
\label{section:logcat}
When concatenating $N$ Lorentz patches, each with $(1+d)$ coordinates (one time and $d$ spatial), naively stacking the $N d$ spatial components biases the resulting feature norm toward higher dimensions: the expected radius grows with $N d$, skewing subsequent layers. We introduce a \emph{log-radius}–preserving concatenation that makes the expected \emph{log} spatial radius invariant to the number of concatenated blocks. Let $v\in\mathbb{R}^{n_i}$ denote the spatial part of a block and assume its radius factorizes as $\|v\|=\sigma\sqrt{T}$ with $T\sim\chi^2_{n_i}$. Then
\[
\mathbb{E}[\log \|v\|]
=\log\sigma+\tfrac12\!\left(\psi\!\left(\tfrac{n_i}{2}\right)+\log 2\right),
\]
where $\psi$ is the digamma function. To keep $\mathbb{E}[\log\|v\|]$ constant across dimensions, we scale each block’s spatial part by
\[
s(n,n_i)\;=\;\exp\!\Bigg(\frac{1}{2}\Big[\psi\!\Big(\frac{n}{2}\Big)-\psi\!\Big(\frac{n_i}{2}\Big)\Big]\Bigg),
\]
with $n=N d$ the total post-concat spatial dimension and $n_i=d$ the per-block spatial dimension. Concretely, given per-window tensors $(t_1,\ldots,t_N)$ (times) and $(u_1,\ldots,u_N)$ with $u_i\in\mathbb{R}^{d}$ (spaces), we form the scaled space
$\tilde u_i = s\,u_i$ and recompute a single time coordinate that keeps the output on the hyperboloid:
\[
t' \;=\; \sqrt{-\tfrac{1}{K} + s^2\!\sum_{i=1}^N (t_i^2+\tfrac{1}{K})}\,,
\]
where $k>0$ sets the origin time via $t_0=\sqrt{k}$ (in practice, with $K=-1$, $k=1$). The final concatenated vector is
$[\,t',\tilde u_1,\ldots,\tilde u_N\,]\in\mathbb{R}^{1+N d}$.
This \emph{log-radius} alignment (i) is parameter-free, (ii) is robust to heavy-tailed radii because it matches \emph{geometric} means, (iii) preserves the Lorentz constraint by design, and (iv) avoids domination by any single wide block, improving stability as kernel size or channel count grows. 

\paragraph{Lorentz convolutional layer.} 
We use channel-last feature map representations throughout HCNNs, and add the Lorentz model’s time component as an additional channel dimension, following~\citet{bdeir2024fully}. A hyperbolic feature map can be define as an ordered set of n-dimensional hyperbolic vectors, where every spatial position contains a vector that can be combined with its neighbors

The convolutional layer can be formulated as a matrix multiplication between a linearized kernel and the concatenation of values within its receptive field~\citep{HNNadd}. Then, we extend this definition by replacing the Euclidean FC and concatenation with our PLFC and \emph{log-radius}–preserving concatenation. 

Let $\matr{x}=\{\vect{x}_{h,w}\in\mathbb{L}^{n}_K\}_{h,w=1}^{H,W}$ be an input hyperbolic feature map and let $\tilde H\times\tilde W$ denotes the kernel size with stride $\delta$.  For each spatial location $(h,w)$ we gather the patch
$\{\vect{x}_{h+\delta\tilde h,\,w+\delta\tilde w}\}_{\tilde h,\tilde w=1}^{\tilde H,\tilde W}\subset\mathbb{L}^{n}_K$,
pad with the origin $(\sqrt{1/(-K)},0,\ldots,0)$ and concatenate these $\tilde H\tilde W$ vectors by the \emph{log-radius} scheme introduced above.  
We defined the Lorentz convolutional layer as 
\begin{equation}
	\vect{y}_{h,w} = \text{PLFC}(\text{LogCat}(\{\vect{x}_{{h'+\delta \tilde{h}},{w'+\delta \tilde{w}}} \in \mathbb{L}^n_K\}_{\tilde{h},\tilde{w}=1}^{\tilde{H},\tilde{W}}\})),
\end{equation}
where $h'$ and $w'$ denote the starting position, and LogCat denotes our \emph{log-radius}–preserving concatenation.

\paragraph{Lorentz dropout.}
To regularize features without leaving the manifold, we adopt a dropout operator that acts on Lorentz coordinates and then \emph{reprojects} to the hyperboloid. Concretely, during training, we apply an elementwise Bernoulli mask to the current representation $x\in\mathbb{L}^n_K$ (probability $p$ of zeroing each entry), yielding $\tilde x$ in ambient $\mathbb{R}^{n+1}$. Because naive masking can violate the hyperboloid constraint and the positivity of the time component, we immediately map back via a projection $\mathrm{proj}_{\mathcal{L}}(\tilde x)$ that restores $\langle x,x\rangle_L=-1/K$ and $x_0>0$. At evaluation, the operator is the identity. 
In practice, we observe that the “mask \& project’’ scheme outperforms the variant “log–exp’’ scheme that applies a log map to $T_0\mathbb{L}^n_K$, performs Euclidean dropout, and then returns via the exponential map, because the nonlinearity of $\exp_0$ couples all coordinates so that masking any tangent component perturbs the entire point after mapping back to the manifold.
It is parameter-free, numerically stable, and compatible with subsequent intrinsic layers (e.g., GyroLBN and PLFC), since its output again lies on $\mathbb{L}^n_K$.

\paragraph{Lorentz activation.}
Following~\cite{bdeir2024fully}, we define activations \emph{directly} in the Lorentz model by acting on the spatial coordinates only and then recomputing the time coordinate from the hyperboloid constraint. For example, given $x=(x_0,x_s)\in L^n_K$ and an elementwise nonlinearity activation function ReLU, \emph{Lorentz ReLU} can be defined as
\[
\mathrm{L\!-\!ReLU}(x)
\;=\;
\Bigl[\sqrt{\tfrac{1}{-K}+\|\mathrm{ReLU}(x_s)\|_2^2}\;,\;\mathrm{ReLU}(x_s)\Bigr].
\]


\section{experiment}
We evaluate ILNN on image classification dataset CIFAR-10/100 ~\citep{krizhevsky2009learning} and genomics (TEB~\citep{gene}, GUE~\citep{zhou2023dnabert}) and compare against Euclidean and multiple hyperbolic baselines under matched training recipes. For fairness, each Euclidean backbone is translated to the hyperbolic setting via \emph{one-to-one} module replacement, keeping depth/width, parameter count, and schedule as close as possible. All models are implemented in PyTorch~\citep{paszke2019pytorch} with 32-bit precision. We fix curvature to $K=-1$ and train with Riemannian optimizers from Geoopt~\citep{kochurov2020geoopt}: RiemannianSGD for CIFAR and RiemannianAdam for genomics. Unless otherwise stated, results are averaged over five random seeds and reported as mean~$\pm$~std; classification uses accuracy on CIFAR-10/100, and MCC on genomics. All experiments are conducted on NVIDIA A100 80GB GPUs. Additional configuration details (batch size, learning rate) appear in~\Cref{appendix:implement}.

\subsection{Image classification}
\label{sec:exp_cifar}

\paragraph{Experimental setup.}  
Following the evaluation setup of HCNN~\citep{bdeir2024fully} and symmetric space~\citep{nguyen2025neural}, we benchmark ILNN on CIFAR-10 and CIFAR-100. For each method, we instantiate a ResNet-18 backbone~\citep{resnet} in the corresponding geometry: The original ResNet‑18 with BN and linear classifier (Euclidean); Euclidean encoder + hyperbolic head (Poincaré or Lorentz); all layers fully to the target manifold (HCNN~\citep{bdeir2024fully} and our ILNN).
The relative distortion from input manifold to classifier~\citep{bdeir2024fully} is comparable across runs ($\delta_{\text{rel}}{=}0.26$ for CIFAR-10; $0.23$ for CIFAR-100).
\begin{wraptable}{r}{0.7\textwidth}  
  \centering
  \caption{Classification accuracy (\%) of ResNet-18 models. We estimate the mean and standard deviation from five runs. The best performance is highlighted in bold (higher is better).}
  \label{tab:classification}
  \begin{adjustbox}{width=\linewidth} 
    \begin{tabular}{lcc}
      \toprule
         & CIFAR-10 & CIFAR-100 \\
         & $(\delta_{\mathrm{rel}} = 0.26)$ & $(\delta_{\mathrm{rel}} = 0.23)$ \\
      \midrule
      Euclidean \citep{resnet} & $95.14 \pm 0.12$ & $77.72 \pm 0.15$ \\
      \midrule
      Hybrid Poincaré \citep{guo-et-al-2022} & $95.04 \pm 0.13$ & $77.19 \pm 0.50$ \\
      Poincaré ResNet \citep{vanspengler2023poincare} & $94.51 \pm 0.15$ & $76.60 \pm 0.32$ \\
      Euclidean-Poincaré-H~\citep{fan2023horospherical} & $81.72 \pm 7.84$ & $44.35 \pm 2.93$ \\
      Euclidean-Poincaré-G~\citep{ganea2018hyperbolic} & $95.14 \pm 0.11$ & $77.78 \pm 0.09$ \\
      Euclidean-Poincaré-B~\citep{nguyen2025neural} & $95.23 \pm 0.08$ & $77.78 \pm 0.15$ \\
      Hybrid Lorentz~\citep{bdeir2024fully} & $94.98 \pm 0.12$ & $78.03 \pm 0.21$ \\
      \midrule
      HCNN Lorentz~\citep{bdeir2024fully} & $95.14 \pm 0.08$ & $78.07 \pm 0.17$ \\
      ILNN (ours) & $\mathbf{95.36 \pm 0.13}$ & $\mathbf{78.41 \pm 0.23}$ \\
      \bottomrule
    \end{tabular}
  \end{adjustbox}
\end{wraptable}
\vspace{-1\baselineskip} 
\paragraph{Main results.}
\Cref{tab:classification} summarizes test accuracy as mean$\pm$sd over five runs.
ILNN attains the highest average accuracy on both datasets, 95.36\% on CIFAR-10 and 78.41\% on CIFAR-100, exceeding the Euclidean ResNet-18 baseline by +0.22 and +0.69 percentage points (pp), respectively.
Relative to the strongest prior hyperbolic competitor (HCNN-Lorentz), ILNN also improves by +0.22 pp on CIFAR-10 and +0.34 pp on CIFAR-100. Although ILNN exhibits a slightly larger standard deviation than HCNN-Lorentz, on CIFAR-10, even the conservative lower bound of ILNN (95.23\%) exceeds the upper bound of HCNN (95.22\%); on CIFAR-100, the lower bound of ILNN is specifically 78.24\%, equal to the upper bound of HCNN.
These gains demonstrate that our intrinsic approach preserves the geometry of the Lorentz model without distortion: by combining point-to-hyperplane FC and GyroBN within fully negative-curvature space, ILNN leverages the native manifold structure more effectively than others.


\paragraph{Visualization.}
We visualize embeddings by mapping network outputs from the Lorentz model to the Poincaré ball and, in parallel, applying the logarithmic map at the origin to view them in the tangent plane; colors indicate output label prediction.
In \Cref{fig:cifar10} (CIFAR-10), the ILNN embeddings form ten compact clusters, whose areas are visibly smaller than the corresponding clusters produced by HCNN. Only marginal colour bleeding occurs at the boundaries, indicating that the decision margins learned by the point-to-hyperplane FC induce margins aligned with the data geometry. The effect becomes even more pronounced on the harder CIFAR‑100 task, as shown in \Cref{fig:cifar100}. Despite packing 100 classes into the same 2D manifold, ILNN still yields dense colour "islands" with clear gaps in between, whereas HCNN exhibits overlapping clouds and several mixed‑colour zones. These visual trends are consistent with the quantitative improvements in~\Cref{tab:classification}.

\subsection{Genomic Classification}
\paragraph{Experimental setup.}
We evaluate on the most challenging subsets of the \textbf{TEB} and \textbf{GUE} genomics benchmarks, \textbf{specifically those on which the published HCNN~\citep{gene} does not surpass a Euclidean convolutional baseline.}  
For every dataset, we instantiate four models that share the same convolutional stem and classifier width:
In our comparisons, the Euclidean CNN uses standard convolutions, Euclidean BN, and a linear classifier; HCNN-S employs Lorentz modules with a single global curvature $K$ across the network; HCNN-M uses Lorentz modules with layer-wise curvatures $K$; and ILNN is fully intrinsic, featuring GyroBN and a point-to-hyperplane FC with gyro-bias, while fixing a global curvature $K=-1$.
All hyperbolic models are optimized with RiemannianAdam (lr $10^{-3}$); the Euclidean baseline uses Adam with an identical schedule.  We measure performance by the Matthews correlation coefficient (MCC).

\paragraph{Main results.}
\Cref{table:cnn_performance} shows that ILNN achieves the best score on every task.  
On the two TEB pseudogene sets, it improves over the Euclidean baseline by +9.6 and +13.0 pp, and still exceeds the stronger HCNN‑S by +2.0 and 8.8 pp, respectively, demonstrating clear gains where previous hyperbolic models were already competitive.  
The advantage is even more striking on GUE. For Covid‑variant classification, HCNN collapses (MCC 36.7/14.8) much lower than the Euclidean model, whereas ILNN matches and slightly surpasses the Euclidean score (64.8 vs.\ 63.6).  On promoter‑related tasks, ILNN consistently raises the bar: Tata core‑promoter detection jumps from 79.9 (best prior) to 83.9, and the most difficult “all’’ split goes from 67.2 to 70.7.  
Across the board, ILNN tightens the worst‑case gap between hyperbolic and Euclidean models and converts several cases of HCNN under‑performance into clear wins. These results confirm that fully intrinsic design choices, point-to-hyperplane FC, GyroLBN, and other proposed components, translate into tangible accuracy gains on real‑world genomic data.

\begin{table}[htbp]
\centering
\renewcommand{\arraystretch}{1.1}
\caption{Model performance (MCC) on all real-world genomics datasets, averaged over five random seeds (mean $\pm$ standard deviation). The two highest-scoring models are in bold. * denotes that the result was reproduced under the same setting.} 
\begin{adjustbox}{width=\textwidth}
\begin{tabular}{cccc|llll}
\toprule
& & & & \multicolumn{4}{c}{\textbf{Model}} \\
\textbf{Benchmark} & \textbf{Task} & \textbf{Dataset} & & \vtop{\hbox{\strut Euclidean}\hbox{\strut CNN}} & \vtop{\hbox{\strut Hyperbolic}\hbox{\strut HCNN-S}} & \vtop{\hbox{\strut Hyperbolic}\hbox{\strut HCNN-M}} & \vtop{\hbox{\strut Hyperbolic}\hbox{\strut ILNN}}\\
\midrule
\multirow{2}{*}{\rotatebox[origin=c]{90}{TEB}}
& \multirow{2}{*}{Pseudogenes} & processed & & 60.66\scriptsize±0.82 & \underline{68.30\scriptsize±\textbf{0.93}} \normalsize& 65.41\scriptsize±5.54 & \textbf{70.26}\scriptsize±\textbf{0.32} \\
& & unprocessed & & 51.94\scriptsize±2.69 & 56.13\scriptsize±0.56 \normalsize& \underline{58.36\scriptsize±1.80} \normalsize\normalsize & \textbf{64.90}\scriptsize±\textbf{0.74} \\
\midrule
\multirow{7}{*}{\rotatebox[origin=c]{90}{GUE}}
& Covid Variant Classification & Covid & & {\underline{63.62\scriptsize±1.34\normalsize}}* & 36.71\scriptsize±9.69 & 14.81\scriptsize±0.46 & \textbf{64.76\scriptsize±0.54\normalsize} \\
\addlinespace[2pt] 
\cline{2-8}
\addlinespace[2pt] 
& & tata & & 78.26\scriptsize±2.85 & 79.54\scriptsize±1.61 & \underline{79.87\scriptsize±2.50} & \textbf{83.90\scriptsize±0.53\normalsize} \\
& Core Promoter Detection & notata & & \underline{66.60\scriptsize±1.07} & 66.52\scriptsize±0.28 & 65.95\scriptsize±0.51 & {\textbf{72.59}\scriptsize±0.69\normalsize} \\
& & all & & 66.47\scriptsize±0.74 & 65.26\scriptsize±1.11 & \underline{{67.16}\scriptsize±0.55} & \textbf{70.89\scriptsize±0.43\normalsize} \\
\addlinespace[2pt] 
\cline{2-8}
\addlinespace[2pt] 
& & tata & & 78.58\scriptsize±3.39 & \underline{79.74\scriptsize±2.66} & 78.77\scriptsize±0.78 & \textbf{83.26\scriptsize±1.90\normalsize} \\
& Promoter Detection & notata & & \underline{90.81\scriptsize±0.51} & 89.86\scriptsize±0.76 & 90.28\scriptsize±0.37 & \textbf{92.48\scriptsize±0.35\normalsize} \\
& & all & & \underline{88.00\scriptsize±0.39} & 87.60\scriptsize±0.51 & 87.93\scriptsize±0.76 & \textbf{91.34\scriptsize±0.38\normalsize} \\
\bottomrule
\addlinespace[5pt]
\end{tabular}
\end{adjustbox}
\label{table:cnn_performance}
\end{table}

\begin{figure}
    \centering
    \includegraphics[width=\linewidth]{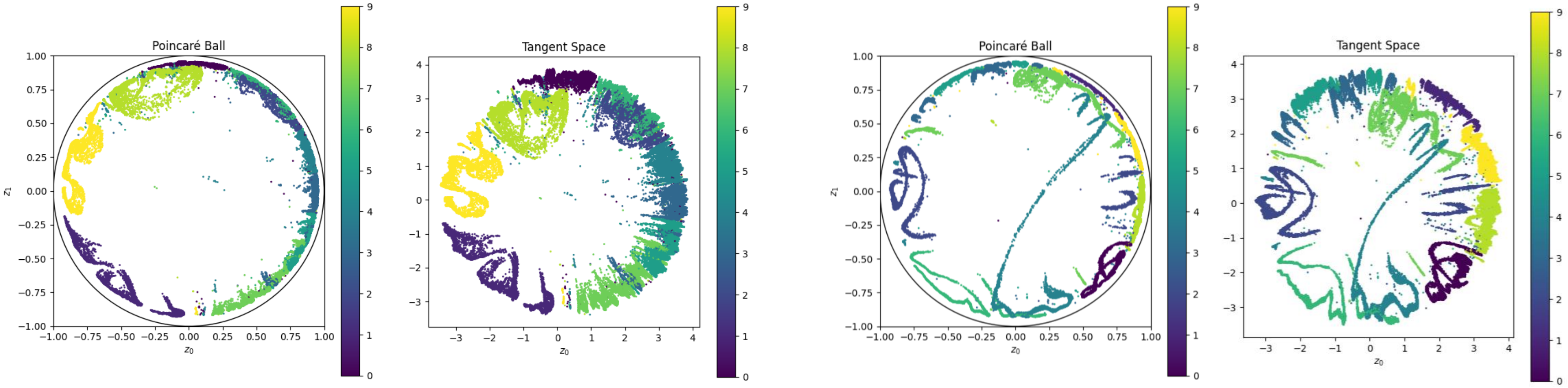}
    \caption{Embedding visualization of CIFAR-10 dataset in Poincar\'e and Tangent Space. Colors represent labels. HCNN (94.98, left) and ILNN (95.48, right).}
    \label{fig:cifar10}
\end{figure}
\begin{figure}
    \centering
    \includegraphics[width=\linewidth]{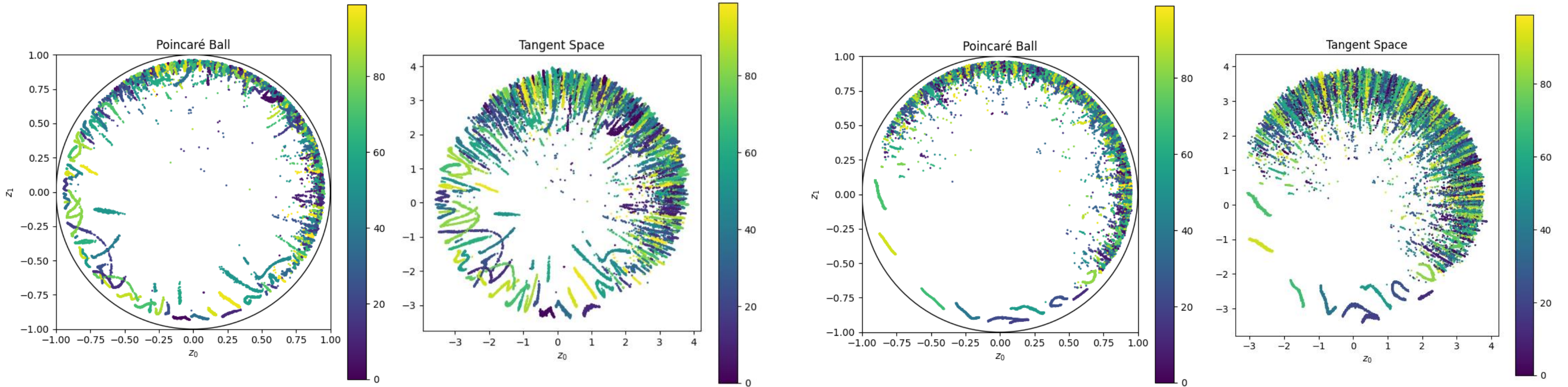}
    \caption{Embedding visualization of CIFAR-100 dataset in Poincar\'e and Tangent Space. Colors represent labels. HCNN (77.67, left) and ILNN (78.64, right).}
    \label{fig:cifar100}
\end{figure}

\subsection{Graphs}

\paragraph{Experimental setup.} To further evaluate the effectiveness of our method, we extend it to three widely-used graph datasets ({\sc Airport}, {\sc Cora} and {\sc PubMed}), which exhibit intricate topological and hierarchical relationships, making them an ideal testbed for evaluating the effectiveness of hyperbolic networks (e.g., HGNN, HGCN, HGAT, HAN, HNN++ and Hypformer). We choose the Hypformer as baseline and replace the Linear layer in Hypformer.

\paragraph{Main results.}
The quantitative results are summarized in Table~\ref{tab:medium_graph_nc}. Overall, hyperbolic models consistently outperform their Euclidean counterparts on all three benchmarks, confirming that negatively curved representations are well-suited for graphs with rich hierarchical structure. Among existing methods, HAN, HNN++, and SGFormer constitute the strongest baselines, while Hypformer further improves their performance. Building on this backbone, \mbox{Hypformer+PLFC} achieves the best results on all datasets, reaching $96.03\%\!\pm\!0.34$ on {\sc Airport}, $85.68\%\!\pm\!0.19$ on {\sc Cora}, and $82.52\%\!\pm\!0.33$ on {\sc PubMed}. Compared with the original Hypformer, this corresponds to absolute gains of $+1.03$, $+0.68$, and $+1.22$ percentage points, respectively. The standard deviations remain small, and the confidence intervals of \mbox{Hypformer+PLFC} are largely separated from those of competing methods on {\sc Airport} and {\sc PubMed}, indicating that the observed gains are statistically stable. These results demonstrate that replacing the Euclidean linear classifier in Hypformer with our hyperbolic point-to-hyperplane PLFC layer provides a simple yet effective way to enhance graph representation learning on graph benchmarks.


\begin{wraptable}{r}{0.7\textwidth}
\vspace{-5mm}

\caption{Testing results (Accuracy) on {\sc Airport}, {\sc Cora} and {\sc PubMed}. The best results are in bold, respectively.}
\label{tab:medium_graph_nc}
\resizebox{\linewidth}{!}{
\begin{tabular}{l|c|c|c}
    \toprule
    Models & {\sc Airport} & {\sc Cora} & {\sc PubMed} \\ 
    \midrule\midrule
    GCN~\citep{gcn2017}                 & $81.4\pm0.6$     & $81.3\pm0.3$     & $78.1\pm0.2$     \\
    GAT~\citep{velivckovic2017graph}    & $81.5\pm0.3$     & $83.0\pm0.7$     & $79.0\pm0.3$     \\
    SGC~\citep{wu2019simplifying}       & $82.1\pm0.5$     & $80.1\pm0.2$     & $78.7\pm0.1$     \\ 
    HGNN~\citep{liu2019HGNN}            & $84.7\pm1.0$     & $77.1\pm0.8$     & $78.3\pm1.2$     \\
    HGCN~\citep{chami2019hyperbolic}    & $89.3\pm1.2$     & $76.5\pm0.6$     & $78.0\pm1.0$     \\
    HGAT~\citep{chami2019hyperbolic}    & $89.6\pm1.0$     & $77.4\pm0.7$     & $78.3\pm1.4$     \\ 
    GraphFormer~\citep{ying2021do}      & $88.1 \pm 1.2$   & $60.0 \pm 0.5$   & $73.3 \pm 0.7$   \\
    GraphTrans~\citep{wu2021representing} & $94.3 \pm 0.6$ & $77.6 \pm 0.8$   & $77.5 \pm 0.7$   \\
    GraphGPS~\citep{rampavsek2022recipe}  & $94.5 \pm 0.9$ & $73.0 \pm 1.4$   & $72.8 \pm 1.4$   \\
    FPS-T~\citep{cho2023curve}          & $96.0\pm0.6$     & $82.3\pm0.7$     & $78.5\pm0.6$     \\
    HAN~\citep{gulcehre2019hyperbolicAT}& $92.9\pm0.6$     & $83.1\pm0.5$     & $79.0\pm0.6$     \\ 
    HNN++~\citep{HNNadd}	               & $92.3 \pm 0.3$   & $82.8 \pm 0.6$   & $79.9 \pm 0.4$   \\
    F-HNN~\citep{chen2022fully}	       & $93.0\pm0.7$     & $81.0\pm0.7$     & $77.5\pm0.8$     \\
    NodeFormer~\citep{wu2022nodeformer} & $80.2\pm0.6$     & $82.2 \pm 0.9$   & $79.9 \pm 1.0$   \\
    SGFormer~\citep{wu2023simplifying}  & $92.9 \pm 0.5$   & $83.2\pm 0.9$    & $80.0\pm0.8$     \\ 
    \midrule
     Hypformer~\citep{yang2024hypformer} 
        & $95.0 \pm 0.5$ & $85.0\pm0.3$ & $81.3\pm 0.3$ \\ 
    Hypformer+PLFC  
        & $\textbf{96.0} \pm \textbf{0.3}$ 
        & $\textbf{85.7}\pm\textbf{0.2}$ 
        & $\textbf{82.5}\pm \textbf{0.3}$ \\
    \bottomrule
\end{tabular}
}
\vspace{-1\baselineskip} 

\end{wraptable}

\subsection{Ablation Study}

To better understand the contributions of each architectural component, we conduct an ablation study on our two main innovations: the point-to-hyperplane fully connected head (PLFC) and the Gyrogroup Lorentz Batch Normalization (GyroLBN). We compare them against their Lorentz counterparts (LFC, LBN) as well as GyroBN, measuring both classification accuracy across CIFAR-10 and genomics tasks, and training efficiency to disentangle the effects of the classifier design and the normalization strategy, and to assess whether their combination leads to complementary improvements.

\begin{table}[htbp]
\centering
\arrayrulecolor{black}
\caption{Ablation on PLFC and GyroLBN. Fit time denotes the training time of the CIFAR-10 dataset per epoch.} 
\begin{adjustbox}{width=\textwidth}
\begin{tabular}{cccc|cccc}
\toprule 
& & & & \multicolumn{4}{c}{\textbf{Model}} \\
\textbf{Benchmark} & \textbf{Task} & \textbf{Dataset} & & \vtop{\hbox{\strut LFC}\hbox{\strut LBN}} & \vtop{\hbox{\strut LFC}\hbox{\strut GyroLBN}} & \vtop{\hbox{\strut PLFC}\hbox{\strut GyroBN}} & \vtop{\hbox{\strut PLFC}\hbox{\strut GyroLBN}}\\
\midrule
\multicolumn{4}{c|}{CIFAR-10} & 95.14\scriptsize±0.08 & 95.19\scriptsize±0.15&95.28\scriptsize±0.17 &  \textbf{95.36}\scriptsize ±0.13\\
\midrule
\multirow{7}{*}{\rotatebox[origin=c]{90}{GUE}}
& & tata & & 78.26\scriptsize±2.85 & 81.33\scriptsize±3.19 & 80.89\scriptsize±3.11 & \textbf{83.90\scriptsize±0.53\normalsize} \\
& Core Promoter Detection & notata & & 66.60\scriptsize±1.07 & 71.92\scriptsize±0.52 & 72.22\scriptsize±0.82 & {\textbf{72.59}\scriptsize±0.69\normalsize} \\
& & all & & 66.47\scriptsize±0.74 & 69.74\scriptsize±1.3 & 70.14\scriptsize±0.45 & \textbf{70.89\scriptsize±0.43\normalsize} \\
\addlinespace[2pt] 
\cline{2-8}
\addlinespace[2pt] 
& & tata & & 78.58\scriptsize±3.39 & 80.46\scriptsize±0.99 & 81.16\scriptsize±1.99 & \textbf{83.26\scriptsize±1.90\normalsize} \\
& Promoter Detection & notata & &90.81\scriptsize±0.51 & 91.88\scriptsize±1.01 & 91.67\scriptsize±0.49 & \textbf{92.48\scriptsize±0.35\normalsize} \\
& & all & & 88.00\scriptsize±0.39 & 90.28\scriptsize±1.04 & 91.02\scriptsize±0.56 & \textbf{91.34\scriptsize±0.38\normalsize} \\
\midrule
\multicolumn{4}{c|}{Fit Time(s)} & 142 & 125 & 314 & 169\\

\bottomrule
\addlinespace[5pt]
\end{tabular}
\end{adjustbox}
\label{table:ablation}
\end{table}
{\color{black}

\paragraph{PLFC vs. LFC.}
Holding the normalizer fixed, replacing the Lorentz fully connected head with the point-to-hyperplane fully connected head consistently improves accuracy. With GyroLBN, PLFC outperforms LFC on CIFAR-10 (95.36 vs. 95.19, +0.17) and on all six genomics subsets, with gains of +2.57 on \textit{tata} core-promoter detection (83.90 vs. 81.33), +0.67 on \textit{notata} (72.59 vs. 71.92), and +0.97 on the \textit{all} split (70.89 vs. 69.74); for promoter detection, the improvements are +2.80 (\textit{tata}: 83.26 vs. 80.46), +0.60 (\textit{notata}: 92.48 vs. 91.88), and +1.06 (\textit{all}: 91.34 vs. 90.28). Although the PLFC variant with GyroBN also performs strongly, exceeding the LFC baseline by more than +3 on all genomics subsets and matching or surpassing it on CIFAR-10 (95.28 vs. 95.14). These results indicate that decision functions based on point-to-hyperplane distance provide a more effective inductive bias than affine logits in both vision and genomics.

\paragraph{GyroLBN vs. LBN, GyroBN.}
Under the same FC, GyroLBN improves over LBN across all tasks. With the LFC head, CIFAR-10 increases from 95.14 to 95.19; on genomics, the gains range from +1.07 to +5.32 (e.g., core-promoter \textit{notata}: 71.92 vs. 66.60). With the PLFC head, GyroLBN is superior to GyroBN on every dataset, including \textit{notata} in core-promoter detection, where it reaches 72.59 versus 72.22. The improvements span +0.32 to +3.01 on genomics and +0.08 on CIFAR-10. Training time measurements show that GyroLBN attains these gains with favorable efficiency, running faster than PLFC with GyroBN (169 s vs. 314 s) and even faster than LBN for the comparison of LFC between GyroLBN and LBN (125 s vs. 142 s). Overall, GyroLBN offers a better accuracy and efficiency trade-off than both LBN and GyroBN in our settings.

Furthermore, to ensure a fair comparison with GyroBN, we further vary the number of Fréchet mean iterations used by the normalizers (1, 2, 5, 10, and a fixed-point solve denoted by $\infty$; see \Cref{appendix:ablation}). Across all iteration cases, \textit{PLFC+GyroLBN} remains the best-performing configuration on CIFAR-10 and on every genomics split, exhibiting the effectiveness of our GyroLBN again.

}

\color{black}

\vspace{-3mm}
\section{Conclusion}
\vspace{-2mm}

This study presented the Intrinsic Lorentz Neural Network, an architecture whose computations remain entirely within the Lorentz model of hyperbolic space. We introduced a point-to-hyperplane fully connected layer that converts signed hyperbolic distances into logits, together with GyroLBN and log-radius concatenation for numerically stable normalization and feature aggregation. Integrated into a coherent network, these components yield superior performance on CIFAR-10, CIFAR-100, and two challenging genomics benchmarks, surpassing Euclidean, hybrid, and prior hyperbolic baselines while preserving competitive training cost. The results underscore the value of keeping every layer intrinsic to the manifold and provide practical building blocks for future work on representation learning in negatively curved geometries.

\section*{Reproducibility Statement}
All theoretical results are established under explicit conditions. Experimental details are given in~\Cref{appendix:implement}. The code will be released upon acceptance.

\section*{Ethics Statement}
This work only uses publicly available datasets and does not involve human subjects or sensitive information. We identify no specific ethical concerns.

\section*{Acknowledgements}
This work was supported by EU Horizon project ELLIOT (No. 101214398) and by the FIS project GUIDANCE (No. FIS2023-03251). We acknowledge CINECA for awarding high-performance computing resources under the ISCRA initiative, and the EuroHPC Joint Undertaking for granting access to Leonardo at CINECA, Italy.

\bibliography{iclr2026_conference}
\bibliographystyle{iclr2026_conference}

\appendix
\cleardoublepage
\appendix
\section*{Appendix}
\hypersetup{linkcolor=black} 

\startcontents[sections]
\printcontents[sections]{l}{1}{\setcounter{tocdepth}{2}}

\clearpage

\section{Use of large language models}
We use large language models to aid or polish writing. 

\section{Operations in the Lorentz model}
\label{apdx:lorentz_ops}

\paragraph{Setup and notation}
Fix a negative curvature \(K<0\). Let \(\langle \cdot,\cdot\rangle_{\mathcal{L}}\) denote the Minkowski bilinear form with signature \((-,+,\dots,+)\) on \(\mathbb{R}^{n+1}\), and write \(\| \cdot \|_{\mathcal{L}}=\sqrt{\langle \cdot,\cdot\rangle_{\mathcal{L}}}\) for the induced (Riemannian) norm on tangent vectors. The \(n\)-dimensional hyperbolic space in the Lorentz (hyperboloid) model is
\[
\mathbb{L}^n_K := \Big\{\,\mathbf{x}\in\mathbb{R}^{n+1}\ :\ \langle \mathbf{x},\mathbf{x}\rangle_{\mathcal{L}}=\tfrac{1}{K},\ \ x_0>0\,\Big\},
\]
and we use \(\origin := \big(\tfrac{1}{\sqrt{-K}},\mathbf{0}\big)\) as the pole (“origin”). When no confusion can arise, \(\|\cdot\|\) denotes the Euclidean norm in a tangent space.

\paragraph{Distance}
For \(\mathbf{x},\mathbf{y}\in\mathbb{L}^n_K\), the geodesic distance inherited from Minkowski space is
\[
D_{\mathcal{L}}(\mathbf{x},\mathbf{y})
= \frac{1}{\sqrt{-K}}\ \cosh^{-1}\!\big( K\,\langle \mathbf{x},\mathbf{y}\rangle_{\mathcal{L}}\big).
\]
A useful identity for computations is the “Lorentzian chord” expression (squared distance)
\[
d_{\mathcal{L}}^2(\mathbf{x},\mathbf{y})
= \|\mathbf{x}-\mathbf{y}\|_{\mathcal{L}}^2
= \frac{2}{K}-2\,\langle \mathbf{x},\mathbf{y}\rangle_{\mathcal{L}}.
\]
Specializing to the pole \(\origin\),
\[
D_{\mathcal{L}}(\mathbf{x},\origin)=\big\|\log_{\origin}^K(\mathbf{x})\big\|,
\qquad
d_{\mathcal{L}}^2(\mathbf{x},\origin)=\frac{2}{K}-2\,\langle \mathbf{x},\origin\rangle_{\mathcal{L}}
=\frac{2}{K}+\frac{2\,x_0}{\sqrt{-K}},
\]
and, equivalently,
\[
D_{\mathcal{L}}(\mathbf{x},\origin)=\frac{1}{\sqrt{-K}}\ \cosh^{-1}\!\big(\sqrt{-K}\,x_0\big).
\]

\paragraph{Tangent space}
The tangent space at \(\mathbf{x}\in\mathbb{L}^n_K\) is the Minkowski-orthogonal complement of \(\mathbf{x}\),
\[
\mathcal{T}_{\mathbf{x}}\mathbb{L}^n_K
=\big\{\mathbf{v}\in\mathbb{R}^{n+1}\ :\ \langle \mathbf{v},\mathbf{x}\rangle_{\mathcal{L}}=0\big\}.
\]
Restricted to \(\mathcal{T}_{\mathbf{x}}\mathbb{L}^n_K\), the metric is positive definite and coincides with the Riemannian metric of \(\mathbb{L}^n_K\).

\paragraph{Exponential and logarithmic maps}
For \(\mathbf{z}\in\mathcal{T}_{\mathbf{x}}\mathbb{L}^n_K\),
\[
\exp_{\mathbf{x}}^K(\mathbf{z})
= \cosh(\alpha)\,\mathbf{x} \;+\; \sinh(\alpha)\,\frac{\mathbf{z}}{\alpha},
\qquad
\alpha=\sqrt{-K}\,\|\mathbf{z}\|_{\mathcal{L}}.
\]
The inverse map \(\log_{\mathbf{x}}^K:\mathbb{L}^n_K\to \mathcal{T}_{\mathbf{x}}\mathbb{L}^n_K\) sends \(\mathbf{y}\in\mathbb{L}^n_K\) to
\[
\log_{\mathbf{x}}^K(\mathbf{y})
=\frac{\cosh^{-1}(\beta)}{\sqrt{\beta^2-1}}\;\big(\mathbf{y}-\beta\,\mathbf{x}\big),
\qquad
\beta=K\,\langle \mathbf{x},\mathbf{y}\rangle_{\mathcal{L}}.
\]
At the pole \(\origin\) these simplify to
\[
\exp_{\origin}^K(\mathbf{z})
=\frac{1}{\sqrt{-K}}\left(\, \cosh\!\big(\sqrt{-K}\,\|\mathbf{z}\|\big)\,,\ 
\sinh\!\big(\sqrt{-K}\,\|\mathbf{z}\|\big)\,\frac{\mathbf{z}}{\|\mathbf{z}\|}\right),
\]
\[
\log_{\origin}^K(\mathbf{y})=
\begin{cases}
\mathbf{0}, & \mathbf{y}=\origin,\\[6pt]
\dfrac{\cosh^{-1}(\beta)}{\sqrt{\beta^2-1}}\big(\mathbf{y}-\beta\,\origin\big), & \text{otherwise},
\end{cases}
\qquad
\beta := K\,\langle \origin,\mathbf{y}\rangle_{\mathcal{L}}.
\]
with the convention \(\mathbf{z}/\|\mathbf{z}\|=\mathbf{0}\) when \(\mathbf{z}=\mathbf{0}\).

\paragraph{Parallel transport}
Transporting \(\mathbf{v}\in\mathcal{T}_{\mathbf{x}}\mathbb{L}^n_K\) along the geodesic from \(\mathbf{x}\) to \(\mathbf{y}\) yields
\[
\transp{\mathbf{x}}{\mathbf{y}}{\mathbf{v}}
= \mathbf{v}
- \frac{\langle \log_{\mathbf{x}}^K(\mathbf{y}),\,\mathbf{v}\rangle_{\mathcal{L}}}{d_{\mathcal{L}}(\mathbf{x},\mathbf{y})}\,
\Big(\log_{\mathbf{x}}^K(\mathbf{y})+\log_{\mathbf{y}}^K(\mathbf{x})\Big)
= \mathbf{v} + \frac{\langle \mathbf{y},\mathbf{v}\rangle_{\mathcal{L}}}{\tfrac{1}{-K}-\langle \mathbf{x},\mathbf{y}\rangle_{\mathcal{L}}}\,(\mathbf{x}+\mathbf{y}).
\]

\paragraph{Lorentzian centroid and average pooling}
Given points \(\mathbf{x}_1,\dots,\mathbf{x}_m\in\mathbb{L}^n_K\) with nonnegative weights \(\nu_i\) (not all zero), the weighted Fréchet mean with respect to the squared Lorentzian distance,
\(\min_{\boldsymbol{\mu}\in\mathbb{L}^n_K}\sum_{i=1}^m \nu_i\, d_{\mathcal{L}}^2(\mathbf{x}_i,\boldsymbol{\mu})\),
is obtained in closed form by
\[
\boldsymbol{\mu}
= \frac{\sum_{i=1}^m \nu_i\,\mathbf{x}_i}{\sqrt{-K}\,\Big|\lVert \sum_{i=1}^m \nu_i\,\mathbf{x}_i \rVert_{\mathcal{L}}\,\Big|}\,.
\]
In neural architectures, an ``average pooling'' over a hyperbolic receptive field can be implemented by taking this Lorentzian centroid of the features in the field.

\paragraph{Lorentz transformations}
A matrix \(\mathbf{A}\in\mathbb{R}^{(n+1)\times(n+1)}\) is a Lorentz transformation if it preserves the Minkowski product:
\(\langle \mathbf{A}\mathbf{x},\mathbf{A}\mathbf{y}\rangle_{\mathcal{L}}=\langle \mathbf{x},\mathbf{y}\rangle_{\mathcal{L}}\) for all \(\mathbf{x},\mathbf{y}\).
Such matrices form the Lorentz group \(\mathbf{O}(1,n)\) (equivalently, \(\mathbf{A}^\top\eta\,\mathbf{A}=\eta\) for the Minkowski metric \(\eta\)).
Restricting to transformations that map the upper sheet to itself yields the time‑orientation–preserving subgroup
\[
\mathbf{O}^+(1,n)=\big\{\mathbf{A}\in\mathbf{O}(1,n)\ :\ (\mathbf{A}\mathbf{x})_0>0\ \text{for all } \mathbf{x}\in\mathbb{L}^n_K\big\},
\]
which acts by isometries on \(\mathbb{L}^n_K\).

Every \(\mathbf{A}\in\mathbf{O}^+(1,n)\) admits a polar decomposition into a Lorentz rotation and a Lorentz boost, \(\mathbf{A}=\mathbf{R}\mathbf{B}\).
The rotation fixes the time axis and rotates spatial coordinates:
\[
\mathbf{R}=\begin{bmatrix}1 & \mathbf{0}^\top\\[2pt]\mathbf{0} & \tilde{\mathbf{R}}\end{bmatrix},
\qquad
\tilde{\mathbf{R}}\in \mathbf{SO}(n).
\]
A boost with velocity vector \(\mathbf{v}\in\mathbb{R}^n\) (\(\|\mathbf{v}\|<1\)) has the block form
\[
\mathbf{B}
=\begin{bmatrix}
\gamma & -\gamma\,\mathbf{v}^\top\\[2pt]
-\gamma\,\mathbf{v} & \mathbf{I}_n+\dfrac{\gamma^2}{1+\gamma}\,\mathbf{v}\mathbf{v}^\top
\end{bmatrix},
\qquad
\gamma = \frac{1}{\sqrt{1-\|\mathbf{v}\|^2}}.
\]
(Equivalently, the spatial block can be written \(\mathbf{I}_n+\frac{\gamma-1}{\|\mathbf{v}\|^2}\mathbf{v}\mathbf{v}^\top\) when \(\mathbf{v}\neq \mathbf{0}\).)




\section[
  \texorpdfstring{{Gyrovector structures on the Lorentz model}}{Gyrovector structures on the Lorentz model}
]{{Gyrovector structures on the Lorentz model}}

\subsection[
  \texorpdfstring{{Gyrogroups}}{Gyrogroups}
]{{Gyrogroups}}

We recall the algebraic notion of a gyrogroup, which extends the concept of a group to settings where associativity is relaxed and corrected by gyrations \citep{ungar2008analytic,ungar2014analytic}.

\begin{definition}[Gyrogroup]
Let $G$ be a nonempty set endowed with a binary operation $\oplus\colon G\times G\to G$.
The pair $(G,\oplus)$ is a \emph{gyrogroup} if, for all $a,b,c\in G$, the following axioms hold:
\begin{itemize}
\item[(G1)] There exists an element $e\in G$ such that $e\oplus a=a$ (left identity).
\item[(G2)] For each $a\in G$ there exists $\ominus a\in G$ such that $\ominus a\oplus a=e$ (left inverse).
\item[(G3)] There exists a map $\operatorname{gyr}[a,b]\colon G\to G$ (the gyration generated by $a,b$) such that
\[
a\oplus (b\oplus c) = (a\oplus b)\oplus \operatorname{gyr}[a,b](c)
\quad\text{(left gyroassociative law)}.
\]
\item[(G4)] $\operatorname{gyr}[a,b]=\operatorname{gyr}[a\oplus b,b]$ (left reduction law).
\end{itemize}
If in addition
\[
a\oplus b = \operatorname{gyr}[a,b](b\oplus a), \qquad \forall\,a,b\in G,
\]
then $(G,\oplus)$ is called \emph{gyrocommutative}.
\end{definition}

Gyrogroups generalize groups: when all gyrations are the identity map, $(G,\oplus)$ reduces to a usual group.
The lack of strict associativity is compensated by the gyration operators, which encode curvature induced nonlinearity.

\subsection[
  \texorpdfstring{{From gyrogroups to gyrovector spaces}}{From gyrogroups to gyrovector spaces}
]{{From gyrogroups to gyrovector spaces}}

To model both addition and scalar multiplication in curved geometries, gyrogroups can be enriched to gyrovector spaces \citep{ungar2008analytic,nguyen2022gyro}.

\begin{definition}[Gyrovector space]
Let $(G,\oplus)$ be a gyrocommutative gyrogroup and let $\odot\colon \mathbb{R}\times G\to G$ be a map called scalar multiplication.
The triple $(G,\oplus,\odot)$ is a \emph{gyrovector space} if, for all $a,b,c\in G$ and $s,t\in\mathbb{R}$,
\begin{itemize}
\item[(V1)] $1\odot a=a$, $0\odot a=e$, $t\odot e=e$, and $(-1)\odot a=\ominus a$.
\item[(V2)] $(s+t)\odot a = s\odot a \oplus t\odot a$.
\item[(V3)] $(st)\odot a = s\odot (t\odot a)$.
\item[(V4)] $\operatorname{gyr}[a,b](t\odot c) = t\odot \operatorname{gyr}[a,b](c)$.
\item[(V5)] $\operatorname{gyr}[s\odot a, t\odot a]$ is the identity map on $G$.
\end{itemize}
\end{definition}

These axioms mirror the familiar properties of vector spaces, with gyrations accounting for the deviation from linearity.
In particular, (V2) and (V3) play the role of distributivity and associativity for scalar multiplication, while (V4)–(V5) guarantee a consistent interaction between gyrations and scaling.

\subsection[
  \texorpdfstring{{Gyrostructures induced by Riemannian geometry}}{Gyrostructures induced by Riemannian geometry}
]{{Gyrostructures induced by Riemannian geometry}}

The above algebraic objects can be constructed on a Riemannian manifold from the exponential and logarithmic maps at a distinguished origin.
Following \citet{nguyen2022gyro,nguyen2023building,chen2025gyrobn}, let $(\mathcal{M},g)$ be a complete Riemannian manifold with identity element $E\in \mathcal{M}$.
Denote by $\operatorname{Exp}_{x}$ and $\operatorname{Log}_{x}$ the Riemannian exponential and logarithmic maps at $x\in\mathcal{M}$, and by $\operatorname{PT}_{x\to y}$ the parallel transport from $T_{x}\mathcal{M}$ to $T_{y}\mathcal{M}$.
For $P,Q,R\in\mathcal{M}$ and $t\in\mathbb{R}$, define
\begin{align}
P\oplus Q
&= \operatorname{Exp}_{P}\!\bigl(\operatorname{PT}_{E\to P}(\operatorname{Log}_{E}Q)\bigr), \label{eq:gyro_add_def}\\
t\odot P
&= \operatorname{Exp}_{E}\!\bigl(t\,\operatorname{Log}_{E}P\bigr), \label{eq:gyro_scal_def}\\
\ominus P
&= (-1)\odot P = \operatorname{Exp}_{E}\!\bigl(-\operatorname{Log}_{E}P\bigr), \label{eq:gyro_inv_def}\\
\operatorname{gyr}[P,Q]R
&= (\ominus(P\oplus Q))\oplus\bigl(P\oplus (Q\oplus R)\bigr). \label{eq:gyro_gyr_def}
\end{align}
The induced gyro inner product, norm, and distance are
\begin{align}
\langle P,Q\rangle_{\mathrm{gr}}
&= \bigl\langle \operatorname{Log}_{E}P,\operatorname{Log}_{E}Q\bigr\rangle_{T_{E}\mathcal{M}}, \\
\|P\|_{\mathrm{gr}}
&= \sqrt{\langle P,P\rangle_{\mathrm{gr}}}, \\
d_{\mathrm{gr}}(P,Q)
&= \bigl\|\ominus P\oplus Q\bigr\|_{\mathrm{gr}}.
\end{align}
Under mild regularity assumptions, $(\mathcal{M},\oplus,\odot)$ forms a gyrovector space and the gyrodistance $d_{\mathrm{gr}}$ coincides with the geodesic distance on a wide class of manifolds, including constant curvature spaces \citep{nguyen2022gyro,chen2025gyrobn}.
In Euclidean space, these constructions reduce exactly to standard vector addition, scalar multiplication, and the Euclidean metric.

\subsection[
  \texorpdfstring{{Constant curvature model and Möbius operations}}{Constant curvature model and Möbius operations}
]{{Constant curvature model and Möbius operations}}

For hyperbolic geometry it is convenient to introduce the constant curvature model
\[
\mathcal{M}_{K} =
\begin{cases}
\mathbb{P}^{n}_{K}, & K<0,\\[2pt]
\mathbb{R}^{n}, & K=0,
\end{cases}
\]
where $\mathbb{P}^{n}_{K}$ is the Poincaré ball of curvature $K<0$ and radius $1/\sqrt{-K}$.
On $\mathbb{P}^{n}_{K}$ the gyrostructures in \Cref{eq:gyro_add_def,eq:gyro_scal_def} admit closed form expressions known as Möbius addition and Möbius scalar multiplication \citep{ungar2008analytic,ganea2018hyperbolic,skopek2019mixed}.

Let $x,y\in\mathbb{P}^{n}_{K}$ and set $c=-K>0$.
The Möbius addition is
\begin{equation}
\label{eq:mobius_add_ball}
x\oplus_{K} y
=
\frac{
\bigl(1+2c\langle x,y\rangle + c\|y\|^{2}\bigr)x
+ \bigl(1-c\|x\|^{2}\bigr)y}{
1+2c\langle x,y\rangle + c^{2}\|x\|^{2}\|y\|^{2}},
\end{equation}
and the Möbius scalar multiplication is
\begin{equation}
\label{eq:mobius_scal_ball}
t\otimes_{K} x
=
\begin{cases}
\displaystyle
\frac{1}{\sqrt{c}}\,
\tanh\!\bigl(t\,\operatorname{artanh}(\sqrt{c}\,\|x\|)\bigr)\,
\frac{x}{\|x\|}, & x\neq 0,\\[10pt]
0, & x=0.
\end{cases}
\end{equation}
Equipped with $\oplus_{K}$ and $\otimes_{K}$, the ball $\mathbb{P}^{n}_{K}$ is a real gyrovector space whose gyrodistance coincides with the hyperbolic geodesic distance.

\subsection[
  \texorpdfstring{{Induced gyrovector operations on the Lorentz model}}{Induced gyrovector operations on the Lorentz model}
]{{Induced gyrovector operations on the Lorentz model}}

The Lorentz model $\mathbb{L}^{n}_{K}$ is isometric to the Poincaré ball $\mathbb{P}^{n}_{K}$ via a standard mapping.
Let $K<0$ and denote $r = 1/\sqrt{-K}$.
For $x=(x_{0},x_{s})\in \mathbb{L}^{n}_{K}$ with $x_{0}>0$ and $-x_{0}^{2}+\|x_{s}\|^{2}=1/K$, define
\begin{align}
\psi\colon \mathbb{L}^{n}_{K} &\to \mathbb{P}^{n}_{K}, &
\psi(x)
&=
\frac{x_{s}}{x_{0}+r}, \label{eq:lorentz_to_ball}\\[3pt]
\psi^{-1}\colon \mathbb{P}^{n}_{K} &\to \mathbb{L}^{n}_{K}, &
\psi^{-1}(u)
&=
\left(
\frac{1+c\|u\|^{2}}{1-c\|u\|^{2}}\,r,\;
\frac{2u}{1-c\|u\|^{2}}
\right),
\label{eq:ball_to_lorentz}
\end{align}
where $c=-K>0$.
The map $\psi$ is a Riemannian isometry between $(\mathbb{L}^{n}_{K},g_{K})$ and $(\mathbb{P}^{n}_{K},g_{K})$ \citep{ratcliffe-2006,skopek2019mixed}.

We transfer the gyrovector structure from the ball to the hyperboloid by conjugation with $\psi$.
For $x,y\in\mathbb{L}^{n}_{K}$ and $t\in\mathbb{R}$, define
\begin{align}
x\oplus_{K}^{\mathcal{L}} y
&=
\psi^{-1}\bigl(\psi(x)\oplus_{K}\psi(y)\bigr),
\label{eq:lorentz_gyro_add}\\
t\otimes_{K}^{\mathcal{L}} x
&=
\psi^{-1}\bigl(t\otimes_{K}\psi(x)\bigr).
\label{eq:lorentz_gyro_scal}
\end{align}
Substituting the explicit expressions \Cref{eq:mobius_add_ball,eq:mobius_scal_ball} and the coordinate maps \Cref{eq:lorentz_to_ball,eq:ball_to_lorentz} yields closed form formulas for $x\oplus_{K}^{\mathcal{L}} y$ and $t\otimes_{K}^{\mathcal{L}} x$ in Lorentz coordinates.
By construction, $(\mathbb{L}^{n}_{K},\oplus_{K}^{\mathcal{L}},\otimes_{K}^{\mathcal{L}})$ is a gyrovector space that is isomorphic to $(\mathbb{P}^{n}_{K},\oplus_{K},\otimes_{K})$.
In particular, the gyrodistance induced by \Cref{eq:lorentz_gyro_add} agrees with the hyperbolic geodesic distance on $\mathbb{L}^{n}_{K}$, and the gyroaddition and gyro scalar multiplication act as hyperbolic analogues of Euclidean vector addition and Euclidean scaling.


\section[
  \texorpdfstring{{Discussion about Intrinsic}}{Discussion about Intrinsic}
]{{Discussion about Intrinsic}}

\label{appendix:intrisic}
In our paper, “intrinsic Lorentz” refers to using only operations that are well-defined on the Lorentz model itself, rather than on its ambient Minkowski space. We define a layer ($F:\mathbb{L}^n_K\to\mathbb{L}^m_K$) intrinsic if

\begin{enumerate}
    \item its input, output, and all intermediate states lie on some $\mathbb{L}^d_K$ (they always satisfy $\langle z,z\rangle_L=-1/K,\ z_0>0)$.
\item it is expressed entirely in terms of the Lorentzian geometry: $\langle\cdot,\cdot\rangle_L$, the induced distance $d_L$, and operators derived from them (exp/log maps, parallel transport, gyroaddition/gyroscaling, Lorentzian centroids, etc.), without ever using arbitrary Euclidean linear maps on Lorentz vectors in the ambient space.
\end{enumerate}

Under this definition, the previous Lorentz fully connected layer (LFC) used in HCNN is not intrinsic. Its update has the form
$$\mathbf{y} = \begin{bmatrix}
\sqrt{|\phi(W\mathbf{x},\mathbf{v})|^2 - 1/K} ,
\phi(W\mathbf{x},\mathbf{v})
\end{bmatrix},
$$
where $\mathbf{x}\in\mathcal{L}^n_K$, $W\mathbf{x}$ is a standard matrix–vector product in $\mathbb{R}^n$ and the operation
\begin{equation}
	\phi(\matr{W}\vect{x},\vect{v}) = \lambda\sigma(\vect{v}^T\vect{x}+\vect{b}')\frac{\matr{W}\psi(\vect{x})+\vect{b}}{||\matr{W}\psi(\vect{x})+\vect{b}||}.
\end{equation}
This $W\mathbf{x}$ is defined using the ambient linear structure of the Minkowski space, not any operation on the Lorentz manifold itself. Because the core transformation is an ambient Minkowski multiplication rather than a Lorentzian/geodesic operation, this layer is only partially intrinsic.

By contrast, our PLFC is constructed entirely from Lorentz-geometry primitives that have closed-form definitions on $\mathcal{L}^n_K$. Each logit is the signed Lorentzian distance from the input point to a learned Lorentz hyperplane, and the output $\mathbf{y}$ is then recovered in closed form as the unique point on $\mathcal{L}^m_K$ whose signed distances to a set of coordinate hyperplanes equal these logits. All steps (hyperplane parameterization, point-to-hyperplane distance, reconstruction of $\mathbf{y}$) are expressed only via the Lorentzian inner product and distance; no ambient Euclidean affine map $W\mathbf{x}+b$ is ever applied. In this sense, PLFC is an intrinsic Lorentz FC layer.


\section{Proofs}
\subsection{Proof of Theorem~\ref{thm:plfc}}
\label{proof:coord_lorentz_fc_layer}
\begin{proof}
We work in the hyperboloid model
$\mathbb{L}^{m}_K=\{\vect{x}=[x_t,\vect{x}_s]\in\mathbb{R}^{1+m}\,:\,
\langle\vect{x},\vect{x}\rangle_{\mathcal{L}}=1/K,\ x_t>0\}$ with $K<0$
and Minkowski bilinear form
$\langle [x_t,\vect{x}_s],[y_t,\vect{y}_s]\rangle_{\mathcal{L}}
=-x_t y_t+\langle\vect{x}_s,\vect{y}_s\rangle$.
Let $\origin=[(-K)^{-1/2},\vect{0}]$ be the basepoint.
Denote by $\mathbf{e}^{(k)}\in\mathbb{R}^m$ the $k$-th Euclidean basis vector
in the spatial block and set $\vect{e}^{(k)}=[0,\mathbf{e}^{(k)}]$.

\paragraph{Lorentz coordinate hyperplanes.}
The $k$-th spatial axis is the geodesic through $\origin$ in the direction
$\mathbf{e}^{(k)}$. The Lorentz \emph{coordinate hyperplane} through $\origin$,
orthogonal to this axis, is given by
\begin{definition}
\label{def:lorentz_hyperplane_kth_axis}
(Lorentz hyperplane containing $\origin$ and orthogonal to the $k$-th axis)
\begin{equation}
\label{eq:lorentz_hyperplane_kth_axis}
\bar H^{K}_{\vect{e}^{(k)},0}
=\bigl\{\vect{x}=[x_t,\vect{x}_s]^\top\in\mathbb{L}^m_K
\ \bigm|\ \langle \vect{e}^{(k)},\vect{x}\rangle_{\mathcal{L}}
=\langle \mathbf{e}^{(k)},\vect{x}_s\rangle
=x_{s,k}=0\bigr\}.
\end{equation}
\end{definition}
This is the special case of the Lorentz hyperplane
$\tilde H_{\vect{z},a}$ in \Cref{eq:hyppp} with
$a=0$ and $\vect{z}=\mathbf{e}^{(k)}$ (hence $\|\vect{z}\|_2=1$), for which
$\cosh(\sqrt{-K}a)=1$ and $\sinh(\sqrt{-K}a)=0$, giving $\langle \vect{z},\vect{x}_s\rangle=x_{s,k}=0$.

With \Cref{def:lorentz_hyperplane_kth_axis}, the preparation for constructing
$\vect{y}$ in \Cref{eq:plfc_def} is complete.

\paragraph{Derivation of $\vect{y}$.}
Let $\vect{x}\in\mathbb{L}^{n}_K$ and
$\vect{y}=[y_t,\vect{y}_s]^\top\in\mathbb{L}^{m}_K$ be the input and output
of the PLFC layer, respectively.
As in \Cref{eq:plfc_def_v}, for $k=1,\dots,m$ we define the scores
$v_k(\vect{x})=v_{\vect{z}_k,a_k}(\vect{x})$ via the Lorentz MLR
logits in \Cref{eq:lorentz_logits}.

To endow $\vect{y}$ with the desired property—that the \emph{signed}
hyperbolic distance from $\vect{y}$ to the $k$‑th coordinate hyperplane
equals $v_k(\vect{x})$—we impose the simultaneous system
\begin{equation}
\label{eq:lorentz_signed_distance_system}
d^{\pm}_{\mathcal{L}}\!\bigl(\vect{y},\bar H^{K}_{\vect{e}^{(k)},0}\bigr)
= v_k(\vect{x}),\qquad k=1,\dots,m.
\end{equation}
Using \Cref{eq:hyppp,eq:lorentz_dist} with $a=0$ and
$\vect{z}=\mathbf{e}^{(k)}$ (so that the denominator equals $1$),
the unsigned point–to–hyperplane distance specializes to
\[
d_{\mathcal{L}}\!\bigl(\vect{y},\bar H^{K}_{\vect{e}^{(k)},0}\bigr)
=\frac{1}{\sqrt{-K}}\,
\bigl|\sinh^{-1}\!\bigl(\sqrt{-K}\,y_{s,k}\bigr)\bigr|.
\]
Orienting $\bar H^{K}_{\vect{e}^{(k)},0}$ by the unit normal $\vect{e}^{(k)}$
and recalling the sign convention used in \Cref{eq:lorentz_logits},
the \emph{signed} distance is therefore
\begin{equation}
\label{eq:signed_distance_coordinate_plane}
d^{\pm}_{\mathcal{L}}\!\bigl(\vect{y},\bar H^{K}_{\vect{e}^{(k)},0}\bigr)
=\frac{1}{\sqrt{-K}}\,
\sinh^{-1}\!\bigl(\sqrt{-K}\,y_{s,k}\bigr).
\end{equation}
Substituting \Cref{eq:signed_distance_coordinate_plane} into
\Cref{eq:lorentz_signed_distance_system} yields, for each $k$,
\begin{equation}
\label{eq:lorentz_y_sk_equation}
\frac{1}{\sqrt{-K}}\,
\sinh^{-1}\!\bigl(\sqrt{-K}\,y_{s,k}\bigr)=v_k(\vect{x}),
\end{equation}
and hence
\begin{equation}
\label{eq:lorentz_y_sk_solution}
y_{s,k}
=\frac{1}{\sqrt{-K}}\,
\sinh\!\bigl(\sqrt{-K}\,v_k(\vect{x})\bigr),\qquad k=1,\dots,m,
\end{equation}
which is exactly \Cref{eq:plfc_def_sk}.
Collecting these coordinates gives
$\vect{y}_s=(y_{s,1},\dots,y_{s,m})^\top$.

Finally, the time coordinate $y_t$ is fixed by the hyperboloid constraint
$\langle\vect{y},\vect{y}\rangle_{\mathcal{L}}=1/K$, i.e.,
\begin{equation}
\label{eq:lorentz_time_from_constraint}
-y_t^{\,2}+\|\vect{y}_s\|_2^{2}=\frac{1}{K}
\quad\Longrightarrow\quad
y_t=\sqrt{(-K)^{-1}+\|\vect{y}_s\|_2^{2}},
\end{equation}
with the positive root chosen to remain on the top sheet, which is
\Cref{eq:plfc_def_time}. Thus \Cref{eq:plfc_def} follows.

\paragraph{Confirmation of the existence of $\vect{y}$.}
For any real scores $v_k(\vect{x})$, \Cref{eq:lorentz_y_sk_solution}
yields real $y_{s,k}$ because $\sinh$ is entire.
Since $-K>0$, we have $(-K)^{-1}+\|\vect{y}_s\|_2^2>0$,
so $y_t$ in \Cref{eq:lorentz_time_from_constraint} is real and strictly positive.
Therefore $\vect{y}=[y_t,\vect{y}_s]\in\mathbb{L}^{m}_K$ always exists and lies
on the correct sheet.
Moreover, \Cref{eq:lorentz_y_sk_equation} guarantees that the signed distance
from $\vect{y}$ to each coordinate hyperplane $\bar H^{K}_{\vect{e}^{(k)},0}$
is exactly $v_k(\vect{x})$, as required.

\paragraph{Flat‑space limit.}
As $K\to 0^{-}$, we have
$\sinh(\sqrt{-K}\,v)=\sqrt{-K}\,v+O(K^{3/2})$, hence
$y_{s,k}\to v_k(\vect{x})$ from \Cref{eq:lorentz_y_sk_solution}.
Using \Cref{eq:lorentz_logits} and the expansion
$\cosh(\sqrt{-K}a)=1+O(K)$, $\sinh(\sqrt{-K}a)=\sqrt{-K}\,a+O(K^{3/2})$,
one obtains $v_k(\vect{x})\to \langle \vect{z}_k,\vect{x}_s\rangle-a_k$,
so the spatial part reduces to the Euclidean affine map with
row vectors $A_k=\vect{z}_k^{\!\top}$ and bias $b_k=a_k$,
as stated below \Cref{thm:plfc}.
\end{proof}

\subsection[
  \texorpdfstring{{Proof of Theorem 2}}{Proof of Theorem 2}
]{{Proof of Theorem 2}}
\label{appendix:theorem2}
\begin{theorem}[Margin preservation and contraction of PLFC and LFC]
Fix a curvature $K<0$. For any input $x$, let the penultimate layer produce
\[
u(x) = (u_1(x),\dots,u_m(x)) \in \mathbb{R}^m,
\]
and let $c$ denote the true class. Define the pre-logit margin
\[
\Delta(x) := u_c(x) - \max_{j\neq c} u_j(x).
\]

Consider two Lorentz output-layer designs that use signed geodesic distances from the output point $y(x)$ to the coordinate hyperplanes as logits:
\begin{itemize}
    \item \emph{PLFC head (intrinsic).} The spatial coordinates are
    \[
    y_{s,k}^{\mathrm{PLFC}}(x)
    = \frac{1}{\sqrt{-K}}\sinh\!\big(\sqrt{-K}\,u_k(x)\big),
    \]
    and the signed Lorentzian distance to the $k$-th coordinate hyperplane is
    \[
    d_k^{\mathrm{PLFC}}(x)
    = \frac{1}{\sqrt{-K}}\operatorname{asinh}\!\big(\sqrt{-K}\,y_{s,k}^{\mathrm{PLFC}}(x)\big).
    \]
    \item \emph{LFC head (extrinsic / linear).} The spatial coordinates are taken directly as
    \[
    y_{s,k}^{\mathrm{LFC}}(x) = u_k(x),
    \]
    and the signed Lorentzian distance to the same hyperplane is
    \[
    d_k^{\mathrm{LFC}}(x)
    = \frac{1}{\sqrt{-K}}\operatorname{asinh}\!\big(\sqrt{-K}\,u_k(x)\big).
    \]
\end{itemize}
Define the distance-based margins
\[
\Delta^{\mathrm{PLFC}}(x) := d_c^{\mathrm{PLFC}}(x) - \max_{j\neq c} d_j^{\mathrm{PLFC}}(x),
\qquad
\Delta^{\mathrm{LFC}}(x) := d_c^{\mathrm{LFC}}(x) - \max_{j\neq c} d_j^{\mathrm{LFC}}(x).
\]

Then, for every sample $x$:
\begin{enumerate}
    \item \textbf{(Margin preservation of PLFC)} 
    \[
    \Delta^{\mathrm{PLFC}}(x) = \Delta(x).
    \]
    \item \textbf{(Margin contraction of LFC)} The LFC head preserves the sign of the margin and contracts its magnitude:
    \[
    \operatorname{sign}\big(\Delta^{\mathrm{LFC}}(x)\big)
    = \operatorname{sign}\big(\Delta(x)\big),
    \qquad
    \bigl|\Delta^{\mathrm{LFC}}(x)\bigr|
    \le \bigl|\Delta(x)\bigr|,
    \]
    with strict inequality $\bigl|\Delta^{\mathrm{LFC}}(x)\bigr| < \bigl|\Delta(x)\bigr|$ whenever $\Delta(x)\neq 0$.
\end{enumerate}
\end{theorem}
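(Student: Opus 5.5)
The plan is to reduce both claims to elementary properties of the scalar map $g(t) := \frac{1}{\sqrt{-K}}\operatorname{asinh}(\sqrt{-K}\,t)$. In terms of $g$, the two heads give
\[
d_k^{\mathrm{PLFC}}(x) = g\bigl(y^{\mathrm{PLFC}}_{s,k}(x)\bigr), \qquad d_k^{\mathrm{LFC}}(x) = g\bigl(u_k(x)\bigr).
\]
Two facts about $g$ will do all the work. First, $g$ is a strictly increasing odd bijection of $\mathbb{R}$, with inverse $g^{-1}(s) = \frac{1}{\sqrt{-K}}\sinh(\sqrt{-K}\,s)$. Second, its derivative is $g'(t) = (1-Kt^2)^{-1/2}$, which lies in $(0,1]$ and equals $1$ only at $t=0$, so $g$ is $1$-Lipschitz.

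\textbf{Margin preservation.} By construction $y^{\mathrm{PLFC}}_{s,k} = g^{-1}(u_k)$, exactly as in the derivation in the proof of Theorem~\ref{thm:plfc}, \Cref{eq:lorentz_y_sk_solution}. Hence $d_k^{\mathrm{PLFC}} = g(g^{-1}(u_k)) = u_k$ for every $k$. Taking $u_c - \max_{j\neq c} u_j$ then gives $\Delta^{\mathrm{PLFC}}(x) = \Delta(x)$ immediately.

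\textbf{Sign preservation for LFC.} Let $j^\ast$ be an index attaining $\max_{j\neq c} u_j$. Because $g$ is strictly increasing, it commutes with the maximum: $\max_{j\neq c} g(u_j) = g(u_{j^\ast})$. Therefore
\[
\Delta^{\mathrm{LFC}}(x) = g(u_c) - g(u_{j^\ast}),
\]
and strict monotonicity gives $\operatorname{sign}\Delta^{\mathrm{LFC}} = \operatorname{sign}(u_c - u_{j^\ast}) = \operatorname{sign}\Delta$.

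\textbf{Contraction for LFC.} Apply the mean value theorem:
\[
|g(u_c) - g(u_{j^\ast})| = g'(\xi)\,|u_c - u_{j^\ast}| \quad\text{for some } \xi \text{ between } u_c \text{ and } u_{j^\ast}.
\]
Since $g'(\xi)\le 1$, this gives $|\Delta^{\mathrm{LFC}}| \le |\Delta|$.

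\textbf{Strict inequality (the main subtlety).} When $\Delta\neq 0$ the interval is nondegenerate, but the mean-value point $\xi$ could a priori equal $0$, where $g'=1$. I would avoid this by using the integral form
\[
g(u_c) - g(u_{j^\ast}) = \int_{u_{j^\ast}}^{u_c} g'(t)\,dt.
\]
Here $g' < 1$ everywhere on the nondegenerate interval except possibly at the single point $t = 0$. The integrand is continuous, so the integral is strictly smaller in magnitude than the interval length $|u_c - u_{j^\ast}|$. This yields $|\Delta^{\mathrm{LFC}}| < |\Delta|$.

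Finally, I would note the hypotheses the argument uses. It requires $K<0$ fixed, so that $-K>0$. The maximum is over a finite set, so it is attained. No property of $u$ beyond being real-valued is needed.
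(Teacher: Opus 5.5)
Your proposal is correct and follows essentially the same route as the paper: PLFC margin preservation via $\operatorname{asinh}\circ\sinh=\mathrm{id}$, and LFC contraction via strict monotonicity of $h$ (so the competing maximizer $j^\star$ is unchanged) together with the integral form $h(a)-h(b)=\int_b^a h'(t)\,dt$, where $0<h'\le 1$ and $h'=1$ only at $t=0$. Your extra mean-value-theorem step for the non-strict bound is harmless, and switching to the integral form to get strictness is exactly the paper's argument.
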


\begin{proof}
Define
\[
h(t) := \frac{1}{\sqrt{-K}}\operatorname{asinh}\!\big(\sqrt{-K}\,t\big), \qquad K<0.
\]
By the point–to–hyperplane distance formula in the Lorentz model, for any Lorentz point $y = (y_0,y_s)$ the signed distance to the $k$-th coordinate hyperplane is exactly $h(y_{s,k})$.

\medskip
\noindent\emph{(1) PLFC preserves the margin.}
For the PLFC head we have
\[
y_{s,k}^{\mathrm{PLFC}}(x)
= \frac{1}{\sqrt{-K}}\sinh\!\big(\sqrt{-K}\,u_k(x)\big),
\]
and thus
\[
d_k^{\mathrm{PLFC}}(x)
= h\big(y_{s,k}^{\mathrm{PLFC}}(x)\big)
= \frac{1}{\sqrt{-K}}\operatorname{asinh}\!\Big(
\sqrt{-K}\cdot \frac{1}{\sqrt{-K}}\sinh\!\big(\sqrt{-K}\,u_k(x)\big)
\Big)
= \frac{1}{\sqrt{-K}}\operatorname{asinh}\!\big(\sinh(\sqrt{-K}\,u_k(x))\big).
\]
Since $\sinh:\mathbb{R}\to\mathbb{R}$ is a bijection with inverse $\operatorname{asinh}$, we have $\operatorname{asinh}(\sinh z)=z$ for all $z\in\mathbb{R}$, hence
\[
d_k^{\mathrm{PLFC}}(x) = u_k(x).
\]
Consequently,
\[
\Delta^{\mathrm{PLFC}}(x)
= d_c^{\mathrm{PLFC}}(x) - \max_{j\neq c} d_j^{\mathrm{PLFC}}(x)
= u_c(x) - \max_{j\neq c} u_j(x)
= \Delta(x),
\]
which proves (1).

\medskip
\noindent\emph{(2) LFC contracts the margin.}
For the LFC head we have $y_{s,k}^{\mathrm{LFC}}(x)=u_k(x)$, hence
\[
d_k^{\mathrm{LFC}}(x) = h\big(u_k(x)\big).
\]
We first record two basic properties of $h$. Differentiating,
\[
h'(t)
= \frac{\mathrm{d}}{\mathrm{d}t}\Big[\frac{1}{\sqrt{-K}}\operatorname{asinh}\!\big(\sqrt{-K}\,t\big)\Big]
= \frac{1}{\sqrt{-K}}\cdot\frac{\sqrt{-K}}{\sqrt{1+(-K)t^2}}
= \frac{1}{\sqrt{1+(-K)t^2}},
\]
so $h'(t)>0$ for all $t$ and $h'(t)\le 1$ with $h'(t)=1$ if and only if $t=0$.
Therefore, $h$ is strictly increasing and $1$-Lipschitz.

Let $j^\star$ be any index of a maximizer of the competing pre-logits:
\[
j^\star \in \arg\max_{j\neq c} u_j(x).
\]
Since $h$ is strictly increasing, the same index maximizes the distance-based logits:
\[
\max_{j\neq c} d_j^{\mathrm{LFC}}(x)
= \max_{j\neq c} h(u_j(x))
= h\big(u_{j^\star}(x)\big).
\]
Hence
\[
\Delta(x) = u_c(x)-u_{j^\star}(x),
\qquad
\Delta^{\mathrm{LFC}}(x) = h\big(u_c(x)\big) - h\big(u_{j^\star}(x)\big).
\]

If $\Delta(x)=0$, then $u_c(x)=u_{j^\star}(x)$ and consequently $\Delta^{\mathrm{LFC}}(x)=0$, so the conclusion holds trivially. Suppose now $\Delta(x)\neq 0$ and, without loss of generality, set
\[
a := u_c(x), \qquad b := u_{j^\star}(x), \qquad a\neq b.
\]
Assume $a>b$; the case $a<b$ is analogous by symmetry. Using the fundamental theorem of calculus,
\[
h(a)-h(b) = \int_b^a h'(t)\,\mathrm{d}t.
\]
Because $h'(t)>0$ for all $t$, we have $h(a)-h(b)>0$, so $\operatorname{sign}(h(a)-h(b))=\operatorname{sign}(a-b)$. Moreover, since $h'(t)\le 1$ everywhere and $h'(t)<1$ for all $t\neq 0$, the integrand is strictly less than $1$ on a subset of $[b,a]$ with positive measure whenever $a\neq b$. Thus
\[
0 < h(a)-h(b)
= \int_b^a h'(t)\,\mathrm{d}t
< \int_b^a 1\,\mathrm{d}t
= a-b.
\]
Taking absolute values yields
\[
\bigl|h(a)-h(b)\bigr| < |a-b|.
\]

Substituting back $a=u_c(x)$ and $b=u_{j^\star}(x)$, we obtain
\[
\operatorname{sign}\big(\Delta^{\mathrm{LFC}}(x)\big)
= \operatorname{sign}\big(\Delta(x)\big),
\qquad
\bigl|\Delta^{\mathrm{LFC}}(x)\bigr|
= \bigl|h(a)-h(b)\bigr|
< |a-b|
= \bigl|\Delta(x)\bigr|.
\]
Combining with the $\Delta(x)=0$ case, this gives
\[
\bigl|\Delta^{\mathrm{LFC}}(x)\bigr|
\le \bigl|\Delta(x)\bigr|,
\]
with strict inequality whenever $\Delta(x)\neq 0$, which proves (2).
\end{proof}

\section{Implementation Details}
\label{appendix:implement}

\subsection[
  \texorpdfstring{{Datasets}}{Datasets}
]{{Datasets}}

All these datasets exhibit hierarchical class relations and high hyperbolicity (low $\delta_{rel}$) as shown in~\Cref{table:hyperbolicity_datasets}, making the use of hyperbolic models well-motivated.
\paragraph{Image Datasets}
For image classification, we adopt the standard benchmarks CIFAR-10 and CIFAR-100\citep{krizhevsky2009learning}. CIFAR-10 and CIFAR-100 each contain 60{,}000 $32\times 32$ color images drawn from 10 and 100 classes, respectively. Following the PyTorch setup and~\citep{bdeir2024fully}, we use 50{,}000 images for training and 10{,}000 for testing. CIFAR-10 and CIFAR-100 are standard proxies for visual object recognition whose categories naturally admit semantic hierarchies (e.g., animal → mammal → dog → specific breed). In CIFAR-100, this is made explicit by grouping the 100 fine-grained classes into 20 coarse superclasses in the original dataset design. This hierarchical structure has been extensively verified in prior hyperbolic vision work~\cite{bdeir2024fully,nguyen2025neural} on CIFAR-10/100

\paragraph{Gene Datasets}
For genomic sequence classification, we evaluate on the Transposable Elements Benchmark (TEB)~\citep{gene} and the Genome Understanding Evaluation (GUE)~\citep{zhou2023dnabert} suite. TEB is a curated, multi-species collection of binary classification tasks spanning seven transposable-element families across retrotransposons, DNA transposons, and pseudogenes; we follow the authors’ released preprocessing and data partitions. GUE aggregates 28 datasets covering seven biologically meaningful tasks—including transcription-factor binding, epigenetic mark prediction, promoter and splice-site detection—with sequences ranging roughly from 70 to 1000 base pairs originating from yeast, mouse, human, and viral genomes. Unless otherwise noted, we adopt the official train/validation/test splits and report the Matthews correlation coefficient (MCC) as our primary metric. The TEB and GUE suites are constructed directly from natural genomic sequences and inherit the biological hierarchies of their domains. TEB tasks span multiple transposable-element families across retrotransposons, DNA transposons, and pseudogenes, which themselves sit in a multi-level taxonomic hierarchy (orders → superfamilies → families). GUE aggregates datasets for transcription-factor binding, promoter and core-promoter detection, splice-site prediction, and COVID-variant classification across several species. These tasks are all manifestations of hierarchical regulatory structure (e.g., motifs → modules → promoters → gene expression).

\begin{table}[htbp]
\centering
\caption{Hyperbolicity values of the datasets used in our experiments ( $\delta_{\mathrm{rel}}$).}
\begin{adjustbox}{width=0.7\textwidth}
\begin{tabular}{ccc c}
\toprule
\textbf{Benchmark} & \textbf{Task} & \textbf{Dataset} & \boldmath{$\delta_{\mathrm{rel}}$} \\
\midrule
\multirow{2}{*}{CIFAR} 
& \multirow{2}{*}{Image classification} & CIFAR-10  & 0.26 \\
& & CIFAR-100 & 0.23 \\
\midrule
\multirow{2}{*}{\rotatebox[origin=c]{90}{TEB}}
& Pseudogenes       & processed   & 0.19 \\
& Pseudogenes       & unprocessed & 0.16 \\
\midrule
\multirow{7}{*}{\rotatebox[origin=c]{90}{GUE}}
& Covid variant classification & covid & 0.42 \\
\addlinespace[2pt]\cline{2-4}\addlinespace[2pt]
& \multirow{3}{*}{Core Promoter detection} & all     & 0.23 \\
&  & notata  & 0.21 \\
&  & tata    & 0.14 \\
\addlinespace[2pt]\cline{2-4}\addlinespace[2pt]
& \multirow{3}{*}{Promoter detection} & all     & 0.26 \\
&  & notata  & 0.26 \\
&  & tata    & 0.14 \\
\bottomrule
\end{tabular}
\end{adjustbox}
\label{table:hyperbolicity_datasets}
\end{table}

\subsection{Settings}
\Cref{tab:hyperparameters_cl} summarizes the hyperparameters used to train the model. 
We additionally note a dataset-specific choice regarding normalization statistics. 
On \emph{CIFAR-10/100} we enable running statistics: we maintain per-channel running
statistics by updating the Lorentzian centroid (mean) and dispersion (variance) with a momentum term and, at test time, substitute these running estimates for the batch statistics.
In contrast, on the \emph{TEB} and \emph{GUE} genomic suites, we disable running statistics entirely, because enabling them consistently led to a collapse of MCC after a few dozen epochs. 
For these genomic datasets, we therefore compute statistics on-the-fly from each evaluation batch (i.e., no moving averages are used at test time). Compared with natural images, the genomic tasks exhibit stronger distributional non-stationarity (heterogeneous sequence lengths and tasks) and higher batch-to-batch variability. 
Under these conditions, momentum-based running estimates accumulate bias and lag behind the true data distribution; in a Lorentzian normalization layer, a biased centroid and underestimated dispersion can over- or under-normalize timelike features, shrinking margins and destabilizing optimization. 

\begin{table}[h]
	\centering
\arrayrulecolor{black}
	\caption{Summary of hyperparameters used in different datasets.}
	\label{tab:hyperparameters_cl}
	\begin{tabular}{c|ccc}
        \toprule
        Hyperparameter & CIFAR-10\&100 & TEB &GUE\\
        \midrule
		{Epochs} & 200&150&150\\
		{Batch size} & 128&256&256\\
		{Learning rate (LR)} & 1e-1& 8e-4& 9e-4\\
        {Drop LR epochs} & 60, 120, 160& 100,130& 100,130\\
        {Drop LR gamma} & 0.2& 0.1& 0.1\\
		{Weight decay} & 5e-4 & 6e-3 & 5e-3\\
		{Optimizer} & (Riemannian)SGD & (Riemannian)Adam& (Riemannian)Adam\\
		{Floating point precision} & 32 bit  & 32 bit& 32 bit\\
  		{GPU type} & RTX A100 & RTX A100& RTX A100\\
        {Num. GPUs} & 1 & 1& 1\\
        {Hyperbolic curvature $K$} & $-1$& $-1$& $-1$\\
        {Dropout rate} & 0.05 &0.05&0.05\\
		\bottomrule
	\end{tabular}
\end{table}

\section{More Ablation Studies}
\label{appendix:ablation}

\Cref{table:moreablation} above expands the ablation to the number of Fréchet mean iterations used by each normalizer while keeping all other components fixed, to fairly comparison as in previous work~\cite{jiang2024diffusaliency,shi2025crossdiff,kang2025hssbench}. Across all GUE datasets and both tasks, \emph{PLFC+GyroLBN} achieves the best accuracy under every iteration budget, and the relative ordering of methods is unchanged as the budget increases. Moving from 1 to 2 and 5 iterations yields modest but consistent gains, whereas 10 steps and the fixed-point solution (cells with gray background, denoted by $\infty$) offer only marginal improvements at a higher computational cost. These trends indicate that the advantage of GyroLBN stems from the normalization rule itself rather than from merely computing a tighter Fréchet mean. In practice, allocating a small budget of two to five iterations recovers nearly all of the attainable accuracy while preserving efficiency, making \emph{PLFC+GyroLBN} the most effective and economical choice in our setting.

\newcommand{\iters}{%
  \begin{tabular}{@{}c@{}}
  1\\
  2\\
  5\\
  10\\
  \cellcolor{gray!20}$\infty$
  \end{tabular}%
}

\begin{table}[htbp]
\centering
\renewcommand{\arraystretch}{1.2}
\caption{Ablation study on the number of Fréchet mean iterations. The symbol $\infty$ indicates that iterations are performed until convergence, which is the setting used in our main ablation experiments.} 
\begin{adjustbox}{width=\textwidth}
\begin{tabular}{ccccc|c|c|c|c}
\toprule 
& & & & & \multicolumn{4}{c}{\textbf{Model}} \\
\textbf{Benchmark} & \textbf{Task} & \textbf{Dataset} & \textbf{Iteration} &
& \vtop{\hbox{\strut LFC}\hbox{\strut LBN}}
& \vtop{\hbox{\strut LFC}\hbox{\strut GyroLBN}}
& \vtop{\hbox{\strut PLFC}\hbox{\strut GyroBN}}
& \vtop{\hbox{\strut PLFC}\hbox{\strut GyroLBN}}\\
\midrule

\multirow{27}{*}{\rotatebox[origin=c]{90}{GUE}}
& & tata & \iters & & 78.26\scriptsize±2.85 & 81.33\scriptsize±3.19 &
 \begin{tabular}{@{}c@{}}
80.09\scriptsize±1.90 \\
81.38\scriptsize±2.74 \\\
79.47\scriptsize±2.21 \\
81.27\scriptsize±1.75 \\
80.89\scriptsize±3.11
\cellcolor{gray!20}
\end{tabular} & \textbf{83.90\scriptsize±0.53\normalsize} \\
\cline{2-9}
  
& Core Promoter Detection & notata & \iters & & 66.60\scriptsize±1.07 & 71.92\scriptsize±0.52 &
 \begin{tabular}{@{}c@{}}
71.99\scriptsize±0.68 \\
71.06\scriptsize±0.49 \\
71.65\scriptsize±0.79 \\
71.12\scriptsize±1.75 \\
72.22\scriptsize±1.44 \cellcolor{gray!20}
\end{tabular} & {\textbf{72.59}\scriptsize±0.69\normalsize} \\
\cline{2-9}

& & all & \iters & & 66.47\scriptsize±0.74 & 69.74\scriptsize±1.3 &
 \begin{tabular}{@{}c@{}}
70.47\scriptsize±0.85 \\
70.42\scriptsize±0.49 \\
70.75\scriptsize±0.45 \\
69.81\scriptsize±0.67 \\
70.14\scriptsize±0.45 \cellcolor{gray!20}
\end{tabular} & \textbf{70.89\scriptsize±0.43\normalsize} \\
\addlinespace[2pt] 
\cline{2-9}
\addlinespace[2pt] 

& & tata & \iters & & 78.58\scriptsize±3.39 & 80.46\scriptsize±0.99 &
 \begin{tabular}{@{}c@{}}
81.71\scriptsize±1.80 \\
80.19\scriptsize±1.90 \\
81.69\scriptsize±3.90 \\
79.79\scriptsize±1.91 \\
81.16\scriptsize±1.99 \cellcolor{gray!20}
\end{tabular} & \textbf{83.26\scriptsize±1.90\normalsize} \\
\cline{2-9}

& Promoter Detection & notata & \iters & & 90.81\scriptsize±0.51 & 91.88\scriptsize±1.01 &
 \begin{tabular}{@{}c@{}}
92.15\scriptsize±0.76 \\
91.89\scriptsize±0.71 \\
92.19\scriptsize±0.41 \\
92.07\scriptsize±0.63 \\
91.67\scriptsize±0.56 \cellcolor{gray!20}
\end{tabular} & \textbf{92.48\scriptsize±0.35\normalsize} \\
\cline{2-9}

& & all & \iters & & 88.00\scriptsize±0.39 & 90.28\scriptsize±1.04 &
 \begin{tabular}{@{}c@{}}
90.45\scriptsize±0.72 \\
91.01\scriptsize±0.99 \\
90.80\scriptsize±0.73 \\
91.18\scriptsize±0.83 \\
91.02\scriptsize±0.56 \cellcolor{gray!20}
\end{tabular} & \textbf{91.34\scriptsize±0.38\normalsize} \\
\bottomrule
\addlinespace[5pt]
\end{tabular}
\end{adjustbox}
\label{table:moreablation}
\end{table}

\end{document}